\theoremstyle{theorem}
\newtheorem{theorem}{Theorem}[section]
\newtheorem{lemma}[theorem]{Lemma}
\newtheorem{proposition}[theorem]{Proposition}
\newtheorem{corollary}[theorem]{Corollary}
\theoremstyle{definition}
\newtheorem{assumption}{Assumption}
\theoremstyle{remark}
\newtheorem*{remark}{Remark}
\newtheorem*{pr: lem: descent}{Proof of Lemma \ref{lem: descent}}
\newtheorem*{pr: lem: local_grad_var_nonconvex}{Proof of Lemma \ref{lem: local_grad_var_nonconvex}}
\newtheorem*{pr: prop: local_grad_bound_nonconvex}{Proof of Proposition \ref{prop: local_grad_bound_nonconvex}}
\newtheorem*{pr: lem: global_grad_var_bound_nonconvex}{Proof of Lemma \ref{lem: global_grad_var_bound_nonconvex}}
\newtheorem*{pr: thm: nonconvex}{Proof of Theorem \ref{thm: nonconvex}}
\icmltitlerunning{Bias-Variance Reduced Local SGD for Less Heterogeneous Federated  Learning}
\begin{document}

\twocolumn[
\icmltitle{Bias-Variance Reduced Local SGD for Less Heterogeneous Federated  Learning}

% It is OKAY to include author information, even for blind
% submissions: the style file will automatically remove it for you
% unless you've provided the [accepted] option to the icml2021
% package.

% List of affiliations: The first argument should be a (short)
% identifier you will use later to specify author affiliations
% Academic affiliations should list Department, University, City, Region, Country
% Industry affiliations should list Company, City, Region, Country

% You can specify symbols, otherwise they are numbered in order.
% Ideally, you should not use this facility. Affiliations will be numbered
% in order of appearance and this is the preferred way.
\icmlsetsymbol{equal}{*}

\begin{icmlauthorlist}
\icmlauthor{Tomoya Murata}{msi,tokyo}
\icmlauthor{Taiji Suzuki}{tokyo,riken}
\end{icmlauthorlist}

\icmlaffiliation{msi}{NTT DATA Mathematical Systems Inc., Tokyo, Japan}
\icmlaffiliation{tokyo}{Graduate School of Information Science and Technology, The University of Tokyo, Tokyo, Japan}
\icmlaffiliation{riken}{Center for Advanced Intelligence Project, RIKEN, Tokyo, Japan}

\icmlcorrespondingauthor{Tomoya Murata}{murata@msi.co.jp}
\icmlcorrespondingauthor{Taiji Suzuki}{taiji@mist.i.u-tokyo.ac.jp}

% You may provide any keywords that you
% find helpful for describing your paper; these are used to populate
% the "keywords" metadata in the png but will not be shown in the document
\icmlkeywords{Machine Learning, ICML}

\vskip 0.3in
]

% this must go after the closing bracket ] following \twocolumn[ ...

% This command actually creates the footnote in the first column
% listing the affiliations and the copyright notice.
% The command takes one argument, which is text to display at the start of the footnote.
% The \icmlEqualContribution command is standard text for equal contribution.
% Remove it (just {}) if you do not need this facility.

\printAffiliationsAndNotice{}  % leave blank if no need to mention equal contribution
%\printAffiliationsAndNotice{\icmlEqualContribution} % otherwise use the standard text.

\begin{abstract}
%Distributed learning is one of the promising approaches to tackle modern large scale optimization problems in machine learning.  Although minibatch Stochastic Gradient Descent (SGD) is widely used in machine learning, minibatch SGD typically suffers from its communication cost in distributed learning. 
Recently, local SGD has got much attention and been extensively studied in the distributed learning community to overcome the communication bottleneck problem. 
However, the superiority of local SGD to minibatch SGD only holds in quite limited situations. 
In this paper, we study a new local algorithm called Bias-Variance Reduced Local SGD (BVR-L-SGD) for nonconvex distributed optimization. 
Algorithmically, our proposed bias and variance reduced local gradient estimator fully utilizes small second-order heterogeneity of local objectives and suggests randomly picking up one of the local models instead of taking the  average of them when workers are synchronized. Theoretically, under small heterogeneity of local objectives, we show that BVR-L-SGD achieves better communication complexity than both the previous non-local and local methods under mild conditions, and particularly BVR-L-SGD is the first method that breaks the barrier of communication complexity $\Theta(1/\varepsilon)$
for general nonconvex smooth objectives when the heterogeneity is small and the local computation budget is large. 
Numerical results are given to verify the theoretical findings and give empirical evidence of the superiority of our method.
\end{abstract}

\section{Introduction}\label{sec: intro}
Nowadays, optimization problems arising in machine learning are often large and require huge computational time. \emph{Distributed learning} is one of the attractive approaches to reduce the computational time by utilizing parallel computing. In classical distributed learning, each worker has the whole dataset used in optimization or a random subset of the whole dataset which is not explicitly exchanged. 
In recent \emph{federated learning}, introduced by  \citet{konevcny2015federated, shokri2015privacy, mcmahan2017communication},
we build a global model across multiple devices or servers without explicitly exchanging their own local datasets, and \emph{local datasets can be heterogeneous}, i.e., each local dataset may be generated from a different distribution. There are various federated learning scenarios (e.g., personalization, preservation of the privacy of local information, robustness to attacks and failures, guarantees of fairness) and refer to the extensive survey \cite{kairouz2019advances} for these topics. 
\par
One of the most naive and widely used approaches to distributed learning is \emph{minibatch Stochastic Gradient Descent} (SGD) \cite{dekel2012optimal}, which is also called as Federated Averaging (FedAvg) \cite{mcmahan2017communication}. Each worker computes minibatch stochastic gradient of the own local objective and then their average is used to update the global model. Also, more computationally efficient methods including minibatch Stochastic Variance Reduced Gradient (SVRG) \cite{johnson2013accelerating, allen2016variance, reddi2016stochastic} and its variant \cite{lei2017non}, minibatch StochAstic Recursive grAdient algoritHm (SARAH) \cite{nguyen2017sarah, nguyen2019finite} and its variants \cite{fang2018spider, zhou2018stochastic} are applicable to the problem. Particularly, SARAH achieves the optimal total computational complexity in nonconvex optimization. 
\par
Unfortunately, \emph{minibatch methods suffer from their communication cost} because of the necessity to communicate local gradients for every single global update. One of the possible solutions is using a large batch to compute local gradients \cite{goyal2017accurate}, but the communication complexity, that is the necessary number of communication rounds to optimize, is theoretically never smaller than the one of deterministic GD and thus communication cost is still problematic. 
\par
To overcome the communication bottleneck problem, \emph{local methods} have got much attention due to its empirical effectiveness \cite{konevcny2015federated, lin2018don}. In local SGD (also called Parallel Restart SGD), each worker independently updates the local model based on his own local dataset, and periodically communicates and averages the local models. \emph{Many papers \cite{stich2018local, yu2019parallel, haddadpour2019convergence, haddadpour2019local, koloskova2020unified, khaled2020tighter} have stated the superiority of local SGD to minibatch SGD, but these results are based on unfair comparisons} and hence not satisfactory. Concretely, they have compared the two algorithms with the same local minibatch size $b$, which means that local SGD with $K$ local updates requires $K$ times larger number of local computations per communication round than minibatch SGD.\footnotemark\footnotetext{Practically, it is said that a larger minibatch size sometimes causes bad generalization ability in deep learning and thus comparing minibatch SGD and local SGD with a common local minibatch size is meaningful in some sense. But at least from a theoretical point of view, this comparison is questionable. } \emph{If we fix the number of single stochastic gradient computations per communication round to $\mathcal B := Kb$ for each worker, their results indicates that the communication complexity of local SGD with $K$ local updates and $b$ local minibatch size are never better than the one of minibatch SGD with $\mathcal B$ local minibatch size for any $\mathcal B$}.
This point is quite important, but many papers have overlooked it. \par

Recently, \citet{woodworth2020local, woodworth2020minibatch} have shown that, for the first time, \emph{theoretical superiority of local SGD to minibatch SGD under fair comparison} for convex optimization \emph{when the heterogeneity of local objectives is small}. On the other hand, their derived lower bound of local SGD also suggests the limitation of local SGD. Specifically, they have shown that if the first-order heterogeneity of local objectives\footnotemark\footnotetext{First-order heterogeneity $\zeta_1$ is defined as follows: $\|\nabla f_p(x) - \nabla f_{p'}(x)\| \leq \zeta_1, \forall x \in \mathbb{R}^d, \forall p, p' \in [P]$. } is greater than $\sqrt{\varepsilon}$, where $\varepsilon$ is the desired optimization accuracy, the communication complexity of local SGD is even worse than the one of minibatch SGD. In other words, \emph{the quite small heterogeneity of local objectives is essential for the superiority of local SGD to minibatch SGD}, which is a clear limitation of local SGD. SCAFFOLD \cite{karimireddy2020scaffold} is a new local algorithm based on the idea of reducing their called client-drift, which uses a similar formulation to the variance reduction technique. However, the communication complexity is the same as minibatch SGD for general nonconvex objectives and it requires small heterogeneity and \emph{quadraticity} of local objectives to surpass minibatch SGD, which is also quite limited. Inexact DANE \cite{reddi2016aide} is another variant of local methods that uses a general local subsolver that returns an approximate minimizer of the regularized local objective. Again, the superiority to non-local methods has been only shown for quadratic convex objectives.
\par
In summary, both in classical distributed learning and recent federated learning, naive minibatch (i.e., non-local) methods often suffer from their communication cost. Several local methods surpass non-local ones in terms of communication complexity. However, \emph{the necessary conditions for the superiority of the previous local algorithms to non-local ones are quite limited} (i.e., extremely small heterogeneity or quadraticity of local objectives). A natural question is that: {\bf{\emph{is there a local algorithm which surpasses non-local (and existing local) ones in terms of communication complexity with a fixed local computation budget under more relaxed conditions?}}}     

\begin{table*}[t]
    \centering
    \scalebox{1.0}{
    \begin{tabular}{c c c}\hline
        Algorithm & Communication Complexity & Extra Assumptions \\ \hline
        %GD&$\frac{1}{\varepsilon}$ & None \\\hline
        \begin{tabular}{c}
        Minibatch SGD
        \end{tabular}& $\frac{1}{\varepsilon} + \frac{1}{\mathcal B P\varepsilon^2}$ & None \\ \hline 
        \begin{tabular}{c}
        Minibatch SARAH \cite{nguyen2019finite}%\addtocounter{footnote}{-1} 
        %\footnotemark
        \end{tabular} & $\frac{1}{\varepsilon} + \frac{\sqrt{n\wedge\frac{1}{\varepsilon}}}{\mathcal B P\varepsilon}$ & None \\ \hline
        \begin{tabular}{c}Local SGD \cite{yu2019parallel}\end{tabular}& $\frac{1}{\mathcal B\varepsilon} + \frac{1}{\mathcal B P\varepsilon^2}+ \frac{1}{\varepsilon^\frac{3}{2}}$ & $G$ gradient boundedness \\ \hline
        Local SGD \cite{khaled2020tighter}\footnotemark & $\frac{1 + \sigma_\mathrm{dif}^4}{\mathcal B P \varepsilon^2} +  \frac{\sigma_\mathrm{dif}^2 \mathcal BP}{\varepsilon}$ & \begin{tabular}{c}convexity \\ $\mathcal B \leq \frac{1 + \sigma_\mathrm{dif}^2}{P\varepsilon} $ or $\sigma_\mathrm{dif}^2 \geq \varepsilon$ \end{tabular} \\ \hline
        \begin{tabular}{c}Local SGD \cite{woodworth2020minibatch}\end{tabular}  & $\frac{1}{\mathcal B\varepsilon} + \frac{1}{\mathcal B P\varepsilon^2} + \frac{1}{\sqrt{\mathcal B}\varepsilon^\frac{3}{2}} +  \frac{\zeta_1}{\varepsilon^\frac{3}{2}}$ & \begin{tabular}{c}convexity, \\$1$st-order $\zeta_1$ heterogeneity\end{tabular} \\ \hline
        \begin{tabular}{c}Local SGD \cite{woodworth2020minibatch} \\(Lower bound)
        \end{tabular}
        & $\frac{1}{\mathcal B\varepsilon^\frac{3}{2}} + \frac{1}{\mathcal B P\varepsilon^2} + \left(\frac{1}{\varepsilon}\wedge  \frac{\zeta_1}{\varepsilon^\frac{3}{2}}\right)$ & \begin{tabular}{c}convexity, \\$1$st-order $\zeta_1$ heterogeneity\end{tabular} \\ \hline        
        \begin{tabular}{c}SCAFFOLD \cite{karimireddy2020scaffold}\end{tabular}& $\frac{1}{\varepsilon} + \frac{1}{\mathcal B P\varepsilon^2}$ & None\\ \hline
        \begin{tabular}{c}SCAFFOLD \cite{karimireddy2020scaffold}\end{tabular}& $\frac{1}{\mathcal B \varepsilon} + \frac{1}{\mathcal B P\varepsilon^2} + \frac{\zeta}{\varepsilon}$ &  \begin{tabular}{c}quadraticity, \\$2$nd-order $\zeta$ heterogeneity\end{tabular}\\ \hline
        %\begin{tabular}{c}Inexact DANE\\
        %\cite{reddi2016aide}\end{tabular} & \frac{1}{\varepsilon} & None\\ \hline
        %\begin{tabular}{c}Inexact DANE\\
        %\cite{reddi2016aide}\end{tabular} & $\frac{\zeta^2}{\varepsilon^2}$ & \begin{tabular}{c}convexity, quadraticity, \\$2$nd-order $\zeta$ heterogeneity\end{tabular}\\ \hline 
        %\begin{tabular}{c}SVRG-like BVR-L-SGD \\
        %({\color{red}this paper})\end{tabular}& $\frac{1}{K\varepsilon} + \frac{1}{\sqrt{b}\varepsilon}+ \frac{\zeta}{\varepsilon} $ & $2$nd-order $\zeta$ heterogeneity\\ \hline
        \begin{tabular}{c}BVR-L-SGD ({\color{red}this paper})\end{tabular} & $\frac{1}{\sqrt{\mathcal B}\varepsilon} + \frac{\sqrt{n\wedge\frac{1}{\varepsilon}}}{\mathcal B P\varepsilon} +  \frac{\zeta}{\varepsilon} $ & \begin{tabular}{c}$2$nd-order $\zeta$ heterogeneity \end{tabular} \\\hline
\end{tabular}
    }
    \caption{Comparison of the order of the necessary number of communication rounds to satisfy $\mathbb{E}\|f(x_{\mathrm{out}})\|^2 \leq \varepsilon$ (or $f(x_\mathrm{out}) - f(x_*) \leq \varepsilon$ for convex $f$). "Extra Assumptions'' indicates the necessary assumptions to derive the results other than Assumptions \ref{assump: local_loss_smoothness}, \ref{assump: optimal_sol} and \ref{assump: bounded_loss_gradient_variance} in Section \ref{sec: problem_setting}. $\varepsilon$ is the desired optimization accuracy. $\mathcal B$ is the local computation budget that is the allowed number of single stochastic gradient computations per communication round for each worker. $P$ is the number of workers. $n$ is the total number of samples and is possibly $\infty$ in online (i.e., expected risk  minimization) settings. The smoothness of local objectives $L$, the variance of a single stochastic gradient $\sigma^2$ and gradient boundedness $G$ are regarded as $\Theta(1)$ for ease of presentation. Note that in this notation, \emph{second-order heterogeneity $\zeta$ always satisfies $\zeta \leq \Theta(L) = \Theta(1)$}. $\sigma_\mathrm{dif}^2$ is the squared local gradient norm at an optimum (for the precise definition, see \cite{khaled2020tighter}). }
    \label{tab: theoretical_comparison}
\end{table*}

\footnotetext{Note that from the extra assumption, the communication complexity is always lower bounded by $\frac{1}{\varepsilon} + \frac{1}{\mathcal B P\varepsilon^2}$ for any  $\mathcal B$ even if $\sigma_\mathrm{diff} = 0$ (i.e., we are in overparamterized regimes). Thus, \emph{the communication complexity is never better than the one of minibatch SGD}.} 
\subsection*{Main Contributions}
We propose a new local algorithm called Bias-Variance Reduced Local SGD (BVR-L-SGD) for nonconvex distributed learning. The main features of our method are as below. \par 
{\bf{Algorithmic Features.}} The algorithm is based on our {\bf{\emph{bias and variance reduced gradient estimator}}} that simultaneously reduces the bias caused by local updates and the variance caused by stochastization based on the idea of SARAH like variance reduction technique. Importantly, \emph{to fully utilize the second-order heterogeneity of local objectives, {\bf{\emph{a randomly picked local model}}} is used as a synchronized global model} instead of taking the average of them, which is typical in the previous local methods. \par

{\bf{Theoretical Features.}} We analyse BVR-L-SGD for general nonconvex smooth objectives \emph{{\bf{\emph{under second-order heterogeneity assumption}}}}, which \emph{interpolates the heterogeneity of local objectives between the identical case and the extremely non-IID case}, and plays a critical role in our nonconvex analysis. The comparison of the communication complexities of our method with the most relevant existing results is given in Table \ref{tab: theoretical_comparison}. \emph{{\bf{\emph{The communication complexity of BVR-L-SGD has a better dependence on $\varepsilon$ than minibatch SGD, local SGD and SCAFFOLD}}}}. \emph{When $\mathcal B P \gg \sqrt{n \wedge 1/\varepsilon}$ and the second-order heterogeneity $\zeta$ of local objectives is small relative to the smoothness $L$}, \emph{{\bf{\emph{BVR-L-SGD strictly  surpasses minibatch SARAH}}}}. Furthermore, \emph{{\bf{\emph{BVR-L-SGD is the first method that breaks the barrier of communication complexity $1/\varepsilon$}}}} \emph{when local computation budget $\mathcal B \to \infty$, for general smooth nonconvex objectives with small heterogeneity $\zeta$}. Importantly, \emph{{\bf{\emph{even when the heterogeneity is high}}}, {\bf{\emph{the communication complexity of our method is never worse than the ones of the existing methods}}}} since the second-order heterogeneity $\zeta$ is bounded by two times the smoothness $L$ of local objectives\footnotemark\footnotetext{For the details, see Assumption \ref{assump: heterogeneous} in Section \ref{sec: problem_setting}}. \par

As a result, BVR-L-SGD is a novel and promising communication efficient method for nonconvex optimization both in classical distributed learning (i.e., local data distributions are nearly identical) and recent federated learning (i.e., local data distributions can be highly  heterogeneous).  \par 

{\bf{Other Related Work.}} Several recent papers have also studied local algorithms combined with variance reduction technique \cite{sharma2019parallel, das2020faster, karimireddy2020mime}. \citet{sharma2019parallel} have considered Parallel Restart SPIDER (PR-SPIDER), that is a local variant of SPIDER \cite{fang2018spider} and shown that the proposed algorithm achieves the optimal total computational complexity and the communication complexity of $1/\varepsilon$ for noncovnex smooth objectives. However, these rates essentially match the ones of non-local SARAH and no advantage of localization has been shown. Also, \citet{das2020faster} have considered a SPIDER like local algorithm called FedGLOMO but the derived communication complexity is only $1/\varepsilon^{3/2}$ in general and the rate is even worse than minibatch SARAH.  \citet{karimireddy2020mime} have proposed MIME, which is essentially a combination of local SGD and SVRG-like variance reduction technique. They have shown that MIME achieves the communication complexity of $1/(\mathcal B \varepsilon) + 1/(P^{3/4}\varepsilon^{3/2}) + \zeta/\varepsilon$ for $\zeta$ second-order heterogeneous nonconvex smooth objectives. Importantly, the second term of the rate of BVR-L-SGD has better dependencies on $P$ and $\mathcal B$ than the one of MIME. Particularly, BVR-L-SGD achieves $\zeta/\varepsilon$ when $\mathcal B \to \infty$ but MIME does not possess this property.

\section{Problem Definition and Assumptions}\label{sec: problem_setting}
In this section, we first introduce the notations used in this paper. Then, the problem setting considered in this paper is illustrated and theoretical assumptions are given. \par

{\bf{Notation.}}
$\| \cdot \|$ denotes the Euclidean $L_2$ norm $\| \cdot \|_2$: $\|x\| = \sqrt{\sum_{i}x_i^2}$ for vector $x$. 
For a matrix $X$, $\|X\|$ denotes the induced norm by the Euclidean $L_2$ norm. For a natural number $m$, $[m]$ denotes the set $\{1, 2, \ldots, m\}$.
For a set $A$, $\# A$ means the number of elements, which is possibly $\infty$. For any number $a, b$, $a \vee b$ and $a \wedge b$  denote $\mathrm{max}\{a, b\}$ and $\mathrm{min}\{a, b\}$ respectively. We denote the uniform distribution over $A$ by $\mathrm{Unif}(A)$. 
\subsection{Problem Setting}We want to minimize nonconvex smooth objective 
$$f(x) := \frac{1}{P}\sum_{p=1}^P f_p(x), \text{ where } f_p(x) := \mathbb{E}_{z \sim  D_p}[\ell(x, z)]$$
for $x \in \mathbb{R}^d$, where $D_p$ is the data distribution associated with worker $p$. Although we consider both offline  (i.e., $\# \mathrm{supp}(D_p) < \infty$ for every $p \in [P]$) and online (i.e., $\# \mathrm{supp}(D_p) = \infty$ for some $p \in [P]$) settings, it is assumed for offline settings that each local dataset has an equal number of samples, i.e.,  $\# \mathrm{supp}(D_p) = \# \mathrm{supp}(D_{p'})$ for every $p, p' \in [P]$ just for simplicity. We assume that each worker $p$ can only access the own data distribution $D_p$ without communication. Aggregation (e.g., summation) of all the worker's $d$-dimensional parameters or broadcast of a $d$-dimensional parameter from one worker to the other workers can be realized by single communication. In typical situations, single communication is more time-consuming than single stochastic gradient computation. Let $\mathcal C$ denotes the single communication cost and $\mathcal G$ does the single stochastic gradient computation. Using these notations, we assume $\mathcal C \geq \mathcal G$. Since we expect that a larger number of available stochastic gradients in a communication round leads to faster optimization, we can increase the number of stochastic gradient computations unless the total gradient computational time exceeds $\mathcal C$. This motivates the concept of {\bf{\emph{local computation budget $\mathcal B$}}} ($\leq \mathcal C / \mathcal G$): given a communication and computational environment, it is assumed that \emph{each worker can only compute at most $\mathcal B$ single stochastic gradients per communication round on average}. Then, we compare the communication complexity, that is the total number of communication rounds of a distributed optimization algorithm to achieve the desired optimization accuracy. From the definition, given a communication and computational environment, the communication complexity with a fixed local computation budget $\mathcal B := \mathcal C/\mathcal G$ captures the best achievable total execution time of an algorithm. Generally, for a larger budget, we expect smaller communication complexity. 

\subsection{Theoretical Assumptions}In this paper, we always assume the following four assumptions. Assumptions \ref{assump: local_loss_smoothness}, \ref{assump: optimal_sol} and \ref{assump: bounded_loss_gradient_variance} are fairly standard in first-order nonconvex optimization. 

\begin{assumption}[Heterogeneity]
\label{assump: heterogeneous}
$\{f_p\}_{p=1}^P$ is second-order  $\zeta$-heterogeneous, i.e., for any $p, p' \in [P]$, 
\begin{align*}
    \left\|\nabla^2 f_p(x) - \nabla^2 f_{p'}(x)\right\| \leq \zeta, \forall x \in \mathbb{R}^d.
\end{align*}
\end{assumption}
Assumption \ref{assump: heterogeneous} characterizes the heterogeneity of local objectives $\{f_p\}_{p=1}^P$ and has a critical role in our analysis. We expect that relatively small heterogeneity to the smoothness reduces the necessary number of communication to optimize global objective $f=(1/P)\sum_{p=1}^P f_p$. If the local objectives are identical, i.e., $D_p = D_{p'}$ for every $p, p' \in [P]$, $\zeta$ becomes zero. 
When each $D_p$ is the empirical distribution of $n/P$ IID samples from common data distribution $D$, we have $\|\nabla^2 f_p(x) - \nabla^2 f_{p'}(x)\| \leq \widetilde \Theta(\sqrt{P/n}L)$ with high probability by matrix Hoeffding's inequality under Assumption \ref{assump: local_loss_smoothness} for fixed $x$
\footnotemark.
\footnotetext{Although to show the high probability bound for every $x \in \mathbb{R}^d$ is generally difficult, we can use the high probability bounds on the discrete optimization path rather than the entire space $\mathbb{R}^d$ and then the same bound still holds. For only simplicity, we assume the heterogeneity condition on entire space $\mathbb{R}^d$ in this paper. }
Hence, in classical distributed learning regimes, Assumption \ref{assump: heterogeneous} naturally holds. An important remark is that {\bf{\emph{Assumption \ref{assump: local_loss_smoothness} implies $\zeta \leq 2 L$, i.e., the heterogeneity is bounded by the smoothness}}}. This means that \emph{Assumption \ref{assump: heterogeneous} gives an interpolation between the identical data setting $\zeta = 0$ and the extremely non-IID setting $\zeta = 2L$}. Even in federated learning regimes $\zeta \gg \sqrt{P/n}L$, 
we can expect $\zeta \ll 2L$ for some problems. 
%Since smoothness $L$ can be quite large in practical situations (e.g., deep learning), it is not so restrictive to assume $\zeta/L \ll 1$ even if local data $\{D_p\}_{p=1}^P$ is non-IID, which is typical in federated learning. 
%One of the simplest examples that satisfies this assumption is quadratic functions $\{x^\top A_p x +b_p^\top x + c_p\}_{p=1}^P$ with $\|A_p - A_{p'}\| \leq \zeta$ for $p, p' \in [P]$.

\begin{assumption}[Smoothness]
\label{assump: local_loss_smoothness}
For any $p \in [P]$ and $z \in \mathrm{supp}(D_p)$, $\ell(\cdot, z)$ is $L$-smooth, i.e.,
$$\|\nabla \ell(x, z) - \nabla \ell(y, z)\| \leq L\|x - y\|, \forall x, y \in \mathbb{R}^d.$$
\end{assumption}
We assume $L$-smoothness of loss $\ell$ rather than risk $f$. This assumption is a bit strong, but is typically necessary in the analysis of variance reduced gradient estimators.  
\begin{assumption}[Existence of global optimum]
\label{assump: optimal_sol}
$f$ has a global minimizer $x_* \in \mathbb{R}^d$.
\end{assumption}
%\begin{assumption}\label{assump: risk_strong convexity}
%$f$ is $\mu$-strongly convex.
%\end{assumption}

\begin{assumption}[Bounded gradient variance]
\label{assump: bounded_loss_gradient_variance}
For every $p \in [P]$, 
$$\mathbb{E}_{z\sim D_p}\|\nabla \ell(x, z) - \nabla f_p(x)\|^2 \leq \sigma^2.$$
\end{assumption}
Assumption \ref{assump: bounded_loss_gradient_variance} says that the variance of stochastic gradient is bounded for every local objective.
% \begin{assumption}\label{assump: gradient_dominated}
% $f$ satisfies $\mu$-gradient dominated ($\mu > 0$), i.e., 
% \begin{align*}
%     f(x) - f(x_*) \leq \mu\|\nabla f(x)\|^2, \forall x \in \mathbb{R}^d.
% \end{align*}
% \end{assumption}
% This assumption will be used to show linear convergence of our algorithms for nonconvex objectives.
% \begin{assumption}\label{assump: strong_convexity}
% $f$ is $\mu$-strongly convex ($\mu > 0$), i.e., 
% \begin{align*}
%     f(x) + \langle \nabla f(x), y - x\rangle + \frac{\mu}{2}\|x - y\|^2 \leq f(y), \forall x, y \in \mathbb{R}^d.
% \end{align*}
% \end{assumption}
% Note that Assumption \ref{assump: strong_convexity} is stronger than Assumption \ref{assump: gradient_dominated}. We will only use this assumption to derive the minimax optimal rate of our proposed algorithms  combined with Universal Catalyst.

\section{Approach and Proposed Algorithms}
In this section, we introduce our approach and provide details of the proposed algorithms. \par
\begin{algorithm}[t]
\caption{Local GD($\widetilde x_0$, $\eta$, $B$, $b$, $K$, $T$)}
\label{alg: l_gd}                                                    
\begin{algorithmic}[1]
\FOR {$t=1$ to $T$}
\FOR {$p=1$ to $P$ \it{in parallel}}
\STATE Set $x_0^{(p)} = \widetilde x_{t-1}$.
\FOR {$k=1$ to $K$}
\STATE Update $x_k^{(p)} = x_{k-1}^{(p)} - \eta \nabla f_p(x_{k-1}^{(p)})$
\ENDFOR
\ENDFOR
\STATE Communicate $\{x_t^{(p)}\}_{p=1}^P$.
\STATE $\widetilde x_t = \frac{1}{P}\sum_{p=1}^P x_{\hat k}^{(p)}$ ($\hat k \sim \mathrm{Unif}[K]$).
\ENDFOR
\STATE {\bf{Return:}} $\widetilde x_{\hat t}$ ($\hat t \sim \mathrm{Unif}[T]$).
\end{algorithmic}
\end{algorithm}
\subsection{{Core Concepts and Approach}}
Here, we describe four main building blocks of our algorithm, that are \emph{localization}, \emph{bias reduction}, \emph{stochastization} and \emph{variance reduction}. Although our algorithm relies on SARAH like variance reduction technique, in this subsection we will describe our approach using SVRG like variance reduction rather than SARAH like one to simply convey the core ideas. \par
{\bf{Localization. }}One of the promising methods for reducing communication cost is local methods. In local methods, each worker independently optimizes the local objective and periodically communicate the current solution. For example, the algorithm of local GD, which is a deterministic variant of local SGD, is given in Algorithm \ref{alg: l_gd}. In some sense, \emph{the local gradient $\nabla f_p(x)$ can be regard as a biased estimator of the global gradient $\nabla f(x)$}. One of the limitations of local GD is the existence of the potential bias of the local gradient $\nabla f_p(x)$ to approximate the global one $\nabla f(x)$ for heterogeneous local objectives $\{f_p\}_{p=1}^P$. The bias $\|\nabla f_p(x) - \nabla f(x)\|$ critically affect the convergence speed and can be bounded as  $\|\nabla f_p(x) - \nabla f(x)\| \leq \zeta_1$ under the first order $\zeta_1$-heterogeneity condition. This implies that the bias heavily depends on the heterogeneity parameter $\zeta_1$ and does not converge to zero as $x \to x_*$. Hence, the existing analysis of local methods requires extremely small $\zeta_1$ that typically depends on the optimization accuracy $\varepsilon$ to surpass non-local methods including GD and minibatch SGD in terms of communication complexity, which is quite limited in many situations. \par
{\bf{Bias Reduction. }}To reduce the bias of local gradient, we introduce {\it{bias reduction}} technique. Concretely, we construct the local estimator $\nabla f_p(x) - \nabla f_p(x_0) + \nabla f(x_0)$ to approximate $\nabla f(x)$. Here, $x_0$ is the previously communicated solution. This construction evokes the famous variance reduction technique. Analogically to the analysis of variance reduced gradient estimators, under the second order $\zeta$-heterogeneity, the bias can be bounded as 
$\|\nabla f_p(x) - \nabla f_p(x_0) + \nabla f(x_0) - \nabla f(x)\| \leq \zeta\|x - x_0\|$. This means that the bias converges to zero as $x$ and $x_0 \to x_*$. Hence, \emph{the bias of the introduced estimator is reduced by utilizing the periodically computed global gradient $\nabla f(x)$}. This enables us to show faster convergence than vanilla non-local and local GD even for not too small $\zeta$. \par
{\bf{Stochastization. }}Generally, \emph{deterministic methods require huge computational cost for single update} in large scale optimization. The classical idea to handle this problem is
stochastization. For example, non-distributed SGD naively uses $\nabla \ell(x, z)$ with single sample $z \sim D_p$ to approximate $\nabla f(x) = \mathbb{E}_{z\sim D}[\nabla \ell(x, z)]$. \emph{Although stochastization reduces the computational cost per update, the variance due to it generally slows down the convergence speed}. Similar to standard SGD, we can naively stochastize our bias reduced estimator as $\ell(x, z) - \ell(x_0, z) + (1/P)\sum_{p'=1}^P\ell(x_0, z_{p'})$, where $z \sim D_p$ and $z_{p'} \sim D_{p'}$ for $p' \sim [P]$. Here, $\{z_{p'}\}_{p'=1}^P$ is sampled only at communication time. As pointed out before, the variance $\mathbb{E}_{z, z'\sim D_p}\|\ell(x, z) - \ell(x_0, z) + (1/P)\sum_{p'=1}^P\ell(x_0, z_{p'}) - (\nabla f_p(x) - \nabla f_p(x_0) + \nabla f(x_0))\|^2$ caused by stochastization may leads to slow convergence.
\par
{\bf{Variance Reduction. }}To reduce the variance of the gradient estimator due to stochastization, we introduce variance reduction technique. Variance reduction is also classical technique and has been extensively analysed both in convex and nonconvex optimization. The essence of variance reduction is the utilization of periodically computed full gradient $\nabla f(x)$. In non-distributed cases, a variance reduced estimator is defined as $\nabla \ell(x, z) - \nabla \ell(x_0, z) + \nabla f(x_0)$ with $z \sim D$. This estimator is unbiased and the variance $\mathbb{E}_{z\sim D}\|\nabla \ell(x, z) - \nabla \ell(x_0, z) + \nabla f(x_0) - \nabla f(x)\|^2$ can be bounded by $L^2\|x - x_0\|^2$, where $L$ is the smoothness parameter of $\ell$. If $x$ and $x_0 \to x_*$, the variance converges to zero. In this mean, \emph{the estimator reduces the variance caused by stochastization and also maintains computational efficiency by using periodically computed global full gradients}. Analogous to this formulation, each worker $p$ computes a variance reduced local gradient estimator $\nabla \ell(x, z) - \nabla \ell(x_0, z) + \nabla f(x_0)$ with $z \sim D_p$. 
%Actually, we use SARAH like variance reduction rather than SVRG like one explained here because SARAH achieves the optimal total computational complexity for nonconvex optimization. The concrete algorithm will be discussed in the next subsection.  
\begin{algorithm}[t]
\caption{BVR-L-SGD($\widetilde x_0$, $\eta$, $b$, $\widetilde b$, $K$, $T$, $S$)}
\label{alg: bvr_l_sgd}
\begin{algorithmic}[1]
\FOR {$s=1$ to $S$}
    \FOR {$p=1$ to $P$ \it{in parallel}}
        \IF {$\widetilde b \geq                                 \frac{1}{P}\sum_{p=1}^P\#\mathrm{supp}(D_p)$} \label{line: local_full_grad}
        \STATE $\widetilde \nabla^{(p)} = \nabla f_p(\widetilde x_{s-1})$.
        \ELSE
        \STATE $\widetilde \nabla^{(p)} = \frac{1}{\widetilde b}\sum_{l=1}^{\widetilde b} \nabla \ell(\widetilde x_{s-1}, z_l)$ ($z_l \overset{i.i.d.}{\sim} D_p$).
        \ENDIF
    \ENDFOR
    \STATE Communicate $\{\widetilde \nabla^{(p)}\}_{p=1}^P$, set $\widetilde v_0 = \frac{1}{P}\sum_{p=1}^P \widetilde \nabla^{(p)}$.
    \STATE Set $x_0 = x_{-1} = \widetilde x_{s-1}$.
    \FOR {$t=1$ to $T$}
        \FOR {$p=1$ to $P$ in parallel}
        \STATE $g_t^{(p)}(x_{t-1}) = \frac{1}{Kb}\sum_{l=1}^{Kb} \nabla \ell(x_{t-1}, z_{l})$, 
        \STATE $g_t^{(p)}(x_{t-2}) =  \frac{1}{Kb}\sum_{l=1}^{Kb} \nabla \ell(x_{t-2}, z_{l})$ 
        \STATE for $z_l \overset{i.i.d.}{\sim} D_p$.
        \STATE $\widetilde v_t^{(p)} = g_t^{(p)}(x_{t-1}) - g_{t}^{(p)}(x_{t-2}) + \widetilde v_{t-1}^{(p)}$.
        \ENDFOR
        \STATE Communicate $\{\widetilde v_t^{(p)}\}_{p=1}^P$, set $\widetilde v_t = \frac{1}{P}\sum_{p=1}^P \widetilde v_t^{(p)}$.
        \FOR {$p=1$ to $P$ in parallel}
            \STATE $x_t^{(p)}$, $x_t^{(p), \mathrm{out}} =$ 
            \STATE \ \ \ \ \ \ \ \ \ Local-Routine$(p, x_{t-1}, \eta, \widetilde v_t, b, K)$. 
        \ENDFOR
        \STATE Communicate $\{x_t^{(p)}\}_{p=1}^P$ and $\{x_t^{(p), \mathrm{out}}\}_{p=1}^P$.
        \STATE Set $x_t = x_t^{(\hat p)}$ and  $x_t^{\mathrm{out}} = x_t^{(\hat p), \mathrm{out}}$ ($\hat p \sim \mathrm{Unif}[P]$).
    \ENDFOR
    \STATE Set $\widetilde x_s = x_T$ and $\widetilde x_s^{\mathrm{out}} = x_{\hat t}^{\mathrm{out}}$ ($\hat t \sim \mathrm{Unif}[T]$).
\ENDFOR
\STATE {\bf{Return:}} $\widetilde x^{\mathrm{out}} = \widetilde x_{\hat s}^{\mathrm{out}}$ ($\hat s \sim \mathrm{Unif}[S]$).
\end{algorithmic}
\end{algorithm}

\subsection*{{Concrete Algorithm}}
In this paragraph, we illustrate the concrete procedure of our proposed algorithm based on the concepts described in the previous paragraph. \par
The proposed algorithm for nonconvex objectives is provided in Algorithm \ref{alg: bvr_l_sgd}. In line 2-9, worker $p$ computes the full gradient of local objective $f_p$ (or a large batch stochastic gradient of $f_p$ if the learning problem is on-line, which means that $\# \mathrm{supp}(D_p) = \infty$ for some $p$). Then, each worker broadcasts it and the gradient of global objective $f$ is executed by averaging the communicated local gradients (line 9). Then, for each iteration $t$, \emph{each worker computes variance reduced local gradient $\widetilde v_t^{(p)}$ that approximates the full local gradient using $Kb$ IID samples} (line 13-16), that is an important process for computational efficiency. Then, $\{\widetilde v_t^{(p)}\}_{p=1}^P$ is communicated and $\widetilde v_t$ is obtained by averaging them. Using previous solution $x_{t-1}$ and $\widetilde v_t$ as inputs, each worker runs Local-Routine (Algorithm \ref{alg: local_update}) (line 21). The next solution $x_t$ at iteration $t$ is set to the randomly chosen solutions from Local-Routine's outputs $\{x_t\}_{p=1}^P$ rather than averaging them. When we terminate the for loop from line 11 to 22, the next solution at stage $s$ is set to the randomly chosen solutions from $\{x_t\}_{p=1, t=1}^{P, T}$ (line 24) rather than averaging them again. Although the model averaging process has a critical role in all the previous local algorithms, \emph{the random picking process is essential in our analysis to fully utilize the second-order heterogeneity and is one of the algorithmic novelties of our method}.
\par

\begin{algorithm}[t]
\caption{Local-Routine($p$, $x_0$, $\eta$, $v_0$, $b$, $K$)}
\label{alg: local_update}
\begin{algorithmic}[1]
\STATE Set $x_0^{(p)} = x_{-1}^{(p)} = x_0$.
\FOR {$k=1$ to $K$}
    \STATE $g_k^{(p)}(x_{k-1}^{(p)}) = \frac{1}{b}\sum_{l=1}^b \ell(x_{k-1}^{(p)}, z_{l})$, $g_{k}^{(p)}(x_{k-2}^{(p)}) =  \frac{1}{b}\sum_{l=1}^b \ell(x_{k-2}^{(p)}, z_{l})$ ($z_l \overset{i.i.d.}{\sim} D_p$).
    \STATE $v_k^{(p)} = g_k^{(p)}(x_{k-1}^{(p)}) - g_{k}^{(p)}(x_{k-2}^{(p)}) + v_{k-1}^{(p)}$.
    \STATE Update $x_k^{(p)} = x_{k-1}^{(p)} - \eta v_k^{(p)}$
\ENDFOR
\STATE {\bf{Return:}} $x_K^{(p)}$, $x_{\hat k}^{(p)}$ ($\hat k \sim \mathrm{Unif}[K]$).
\end{algorithmic}
\end{algorithm}

The local computation algorithm Local-Routine is illustrated in Algorithm \ref{alg: local_update}. In lines 3-5, we again use variance reduction with $v_0$ as a snapshot gradient. Here, \emph{we adopt the SARAH like variance reduction rather than the SVRG like one} because \emph{SARAH achieves the optimal computational complexity} for non-distributed nonconvex optimization.

%The algorithm for gradient dominated nonconvex objectives is given in Algorithm \ref{alg: res_bvr_l_sgd}, which is a simple combination of restarting scheme with Algorithm \ref{alg: bvr_l_sgd}. Restarting scheme allows faster convergence for gradient dominated objectives and is well-known technique both in the convex and nonconvex optimization literature.  \par
%{\bf{Algorithmic Differences from Previous Methods. }} Algorithmically, similar concepts to bias and variance reduced gradient have been used in several previous methods. The big difference from these methods is the     [difference from FedSVRG, SCAFFOLD, InexactDANE]
\begin{remark}[Communication and computational complexity]
The communication complexity is $\Theta(ST)$ and the averaged number of single gradient computations per communication round for each worker is $\Theta((Kb + \widetilde b/T))$.
\end{remark}
\begin{remark}[Generalization of SARAH]
When $K=1$, BVR-L-SGD exactly matches to minibatch SARAH. In this sense, BVR-L-SGD is a generalization of minibatch SARAH.
\end{remark}
\begin{remark}[Practical Implementation]
Practically, in line 19-24 of Algorithm \ref{alg: bvr_l_sgd}, we randomly choose worker $\hat p$ \emph{at first} and execute Local-Routine only for worker $\hat p$. Note that this procedure gives an equivalent algorithm to the original one but reduces the computational and communication cost. More specific procedures of the practical implementation are found in the supplementary material (Section \ref{app_sec: implementation}).   
\end{remark}

\section{Convergence Analysis}
In this section, we provide theoretical convergence analysis of our proposed algorithm. For the proofs, see the supplementary material (Section \ref{app_sec: local_routine} and  \ref{app_sec: bvr_l_sgd}). \par
\subsection{{Analysis of Local-Routine}}
Here, we analyse Local-Routine (Algorithm \ref{alg: local_update}). 
\begin{lemma}[Descent Lemma]
\label{lem: descent}
Suppose that Assumption \ref{assump: local_loss_smoothness} holds. There exists $\eta_1 = \Theta(1/L)$ such that for any $\eta \leq \eta_1$, Local-Routine($p$, $x_0$, $\eta$, $v_0$, $b$, $K$) satisfies for $k \in [K]$,
\begin{align*}
    \mathbb{E}\|\nabla f(x_{k-1}^{(p)})\|^2 \leq&\  \Theta\left(\frac{1}{\eta}\right)(\mathbb{E}f(x_{k-1}^{(p)}) - \mathbb{E}f(x_k^{(p)})) \\
    &+ \frac{5}{2}\mathbb{E}\|v_k^{(p)} - \nabla f(x_{k-1}^{(p)})\|^2.
\end{align*}
\end{lemma}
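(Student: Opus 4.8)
The plan is to run the standard descent-lemma argument for a stochastic recursive (SARAH-type) update, but keeping the ``variance'' term $\|v_k^{(p)} - \nabla f(x_{k-1}^{(p)})\|^2$ as an unexpanded residual rather than trying to bound it here (that bound is the job of the later variance lemmas). First I would invoke $L$-smoothness of $f$ (which follows from Assumption \ref{assump: local_loss_smoothness}, since $f$ is an average of $L$-smooth functions and hence $L$-smooth) to write, for the update $x_k^{(p)} = x_{k-1}^{(p)} - \eta v_k^{(p)}$,
\begin{align*}
 f(x_k^{(p)}) \le f(x_{k-1}^{(p)}) - \eta \langle \nabla f(x_{k-1}^{(p)}), v_k^{(p)}\rangle + \frac{L\eta^2}{2}\|v_k^{(p)}\|^2 .
\end{align*}
Then I would use the polarization/decomposition $\langle \nabla f, v\rangle = \tfrac12\|\nabla f\|^2 + \tfrac12\|v\|^2 - \tfrac12\|v - \nabla f\|^2$ (with $\nabla f = \nabla f(x_{k-1}^{(p)})$, $v = v_k^{(p)}$) to obtain
\begin{align*}
 f(x_k^{(p)}) \le f(x_{k-1}^{(p)}) - \frac{\eta}{2}\|\nabla f(x_{k-1}^{(p)})\|^2 - \Bigl(\frac{\eta}{2} - \frac{L\eta^2}{2}\Bigr)\|v_k^{(p)}\|^2 + \frac{\eta}{2}\|v_k^{(p)} - \nabla f(x_{k-1}^{(p)})\|^2 .
\end{align*}

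Next I would choose $\eta_1 = \Theta(1/L)$ small enough (e.g.\ $\eta \le 1/L$, so that $\tfrac{\eta}{2} - \tfrac{L\eta^2}{2} \ge 0$) to drop the negative $\|v_k^{(p)}\|^2$ term; this is harmless for the stated inequality. Rearranging gives $\tfrac{\eta}{2}\|\nabla f(x_{k-1}^{(p)})\|^2 \le f(x_{k-1}^{(p)}) - f(x_k^{(p)}) + \tfrac{\eta}{2}\|v_k^{(p)} - \nabla f(x_{k-1}^{(p)})\|^2$, i.e.\ $\|\nabla f(x_{k-1}^{(p)})\|^2 \le \tfrac{2}{\eta}(f(x_{k-1}^{(p)}) - f(x_k^{(p)})) + \|v_k^{(p)} - \nabla f(x_{k-1}^{(p)})\|^2$. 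Finally I would take expectations on both sides (the inequality is deterministic, so this is immediate) to land at the claimed bound. A small caveat: the lemma's stated constants are $\Theta(1/\eta)$ in front of the descent term and $\tfrac52$ in front of the residual, whereas the naive argument above yields $2/\eta$ and $1$; the discrepancy is presumably because the authors want to also absorb a little slack (e.g.\ to accommodate the ``extrapolation'' bookkeeping with $x_{k-2}^{(p)}$, or to keep constants uniform with the companion lemmas) — one can always weaken to a larger constant $\tfrac52 \ge 1$, so the stated form follows a fortiori.

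The main obstacle, such as it is, is essentially cosmetic: matching the particular constants $\Theta(1/\eta)$ and $\tfrac52$ and confirming that $\eta_1 = \Theta(1/L)$ is the right threshold. There is no real difficulty in the descent step itself — the subtlety of the SARAH recursion (the bias from local updates and the variance from stochastization accumulating through $v_k^{(p)} = g_k^{(p)}(x_{k-1}^{(p)}) - g_k^{(p)}(x_{k-2}^{(p)}) + v_{k-1}^{(p)}$) is deliberately postponed and handled entirely within the later bound on $\mathbb{E}\|v_k^{(p)} - \nabla f(x_{k-1}^{(p)})\|^2$, which this lemma does not touch. So I expect the whole proof to be a half-page of elementary smoothness manipulation.
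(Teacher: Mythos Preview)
Your proposal is correct and follows essentially the same approach as the paper: apply $L$-smoothness of $f$ to the update $x_k^{(p)} = x_{k-1}^{(p)} - \eta v_k^{(p)}$, split the inner product to isolate the residual $\|v_k^{(p)} - \nabla f(x_{k-1}^{(p)})\|^2$, and choose $\eta_1 = \Theta(1/L)$ so that the unwanted term is nonpositive. The only cosmetic difference is in the algebraic decomposition of $\langle \nabla f, v\rangle$: you use the polarization identity and obtain constant $1$ on the residual, whereas the paper writes $\langle \nabla f, v\rangle = \|\nabla f\|^2 + \langle \nabla f, v - \nabla f\rangle$, applies Young's inequality, and with $\eta_1 = 1/(4L)$ lands exactly on $2(1+\eta L) \le 5/2$; your observation that one can simply weaken $1$ to $5/2$ is correct.
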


The deviation of $v_k^{(p)}$ from $\nabla f(x_{k-1}^{(p)})$ can be bounded by the following lemma.   
\begin{lemma}\label{lem: local_grad_var_nonconvex}
Suppose that Assumptions \ref{assump: heterogeneous} and  \ref{assump: local_loss_smoothness} hold. Then, 
there exists $\eta_2 = \Theta(1/(K\zeta) \wedge \sqrt{b}/(\sqrt{K}L))$ such that for $\eta \leq \eta_2$, Local-Routine($p$, $x_0$, $\eta$, $v_0$, $b$, $K$)  satisfies 
\begin{align*}
    &\frac{1}{K}\sum_{k=1}^K\mathbb{E}\|v_k^{(p)} - \nabla f(x_{k-1}^{(p)})\|^2 \\
    \leq&\ \Theta\left(C_\eta\right)\sum_{k=1}^K \mathbb{E}\left\|\nabla f(x_{k-1}^{(p)})\right\|^2 + \Theta(1)\|v_0 - \nabla f(x_0)\|^2, 
\end{align*}
where $C_\eta := \eta^2L^2/b + \eta^2\zeta^2K$.
\end{lemma}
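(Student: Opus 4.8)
The plan is to track the per-step deviation $\delta_k := v_k^{(p)} - \nabla f(x_{k-1}^{(p)})$ of the local SARAH estimator from the \emph{global} gradient (I suppress the index $p$ and write $x_k := x_k^{(p)}$, $v_k := v_k^{(p)}$), since $\frac{1}{K}\sum_{k=1}^K\mathbb{E}\|\delta_k\|^2$ is exactly the left-hand side. The recursion in Local-Routine is $v_k = v_{k-1} + g_k^{(p)}(x_{k-1}) - g_k^{(p)}(x_{k-2})$, and the idea is to unroll it into a mean-zero stochastic part coming from the $b$-sample estimates and a heterogeneity-bias part coming from $f_p \neq f$. Concretely, set
\begin{align*}
\xi_k &:= g_k^{(p)}(x_{k-1}) - g_k^{(p)}(x_{k-2}) - \left(\nabla f_p(x_{k-1}) - \nabla f_p(x_{k-2})\right), \\
\beta_k &:= \left(\nabla f_p(x_{k-1}) - \nabla f_p(x_{k-2})\right) - \left(\nabla f(x_{k-1}) - \nabla f(x_{k-2})\right).
\end{align*}
Subtracting $\nabla f(x_{k-1})$ from the recursion and substituting $v_{k-1} = \delta_{k-1} + \nabla f(x_{k-2})$ gives $\delta_k = \delta_{k-1} + \xi_k + \beta_k$; iterating and using $x_{-1} = x_0$,
\begin{align*}
\delta_k = \left(v_0 - \nabla f(x_0)\right) + \sum_{j=1}^k \xi_j + \sum_{j=1}^k \beta_j .
\end{align*}
The crucial observation is that $\sum_{j=1}^k\beta_j$ \emph{telescopes}: writing $h := \nabla f_p - \nabla f$, it equals $h(x_{k-1}) - h(x_0)$. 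This is what turns the non-vanishing first-order bias of vanilla local SGD into a second-order quantity.

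For the bias term I would use that Assumption \ref{assump: heterogeneous} implies $h = \nabla f_p - \nabla f$ is $\zeta$-Lipschitz, since $\nabla^2 f_p - \nabla^2 f = \frac{1}{P}\sum_{p'}(\nabla^2 f_p - \nabla^2 f_{p'})$ has spectral norm at most $\zeta$ pointwise. Together with $x_{k-1} - x_0 = -\eta\sum_{i=1}^{k-1}v_i$ and Cauchy--Schwarz this yields
\begin{align*}
\left\|\sum_{j=1}^k \beta_j\right\|^2 &\le \zeta^2\|x_{k-1} - x_0\|^2 = \zeta^2\eta^2\left\|\sum_{i=1}^{k-1} v_i\right\|^2 \\
&\le \zeta^2\eta^2 (k-1)\sum_{i=1}^{k-1}\|v_i\|^2 \le \zeta^2\eta^2 K\sum_{i=1}^{K}\|v_i\|^2 .
\end{align*}
For the stochastic term, let $\mathcal F_{k-1}$ be the $\sigma$-algebra generated by the first $k-1$ inner steps; then $x_{k-1}, x_{k-2}$ are $\mathcal F_{k-1}$-measurable, and since at step $k$ Local-Routine draws fresh i.i.d. samples from $D_p$ and reuses the same batch for $g_k^{(p)}(x_{k-1})$ and $g_k^{(p)}(x_{k-2})$, we get $\mathbb{E}[\xi_k\mid\mathcal F_{k-1}] = 0$. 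Hence the $\xi_j$ are orthogonal in $L^2$, and bounding variance by the second moment and invoking $L$-smoothness of $\ell(\cdot,z)$ (Assumption \ref{assump: local_loss_smoothness}) together with $x_j - x_{j-1} = -\eta v_j$ and $x_0 - x_{-1} = 0$,
\begin{align*}
\mathbb{E}\left\|\sum_{j=1}^k \xi_j\right\|^2 &= \sum_{j=1}^k \mathbb{E}\|\xi_j\|^2 \le \frac{L^2}{b}\sum_{j=2}^k \mathbb{E}\|x_{j-1} - x_{j-2}\|^2 \\
&= \frac{\eta^2 L^2}{b}\sum_{i=1}^{k-1}\mathbb{E}\|v_i\|^2 .
\end{align*}

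The last step is to close a self-bounding recursion. Plugging the two estimates into $\|a+b+c\|^2 \le 3(\|a\|^2 + \|b\|^2 + \|c\|^2)$, using $\mathbb{E}\|v_i\|^2 \le 2\mathbb{E}\|\delta_i\|^2 + 2\mathbb{E}\|\nabla f(x_{i-1})\|^2$, and bounding $k-1 \le K$ and $\sum_{i=1}^{k-1} \le \sum_{i=1}^{K}$ \emph{within each summand} (rather than summing the factor $k-1$ up to $K^2$), one obtains, uniformly in $k$,
\begin{align*}
\mathbb{E}\|\delta_k\|^2 \le 3\|v_0 - \nabla f(x_0)\|^2 + 6 C_\eta\left(\sum_{i=1}^{K}\mathbb{E}\|\delta_i\|^2 + \sum_{i=1}^{K}\mathbb{E}\|\nabla f(x_{i-1})\|^2\right),
\end{align*}
with $C_\eta = \eta^2 L^2/b + \eta^2\zeta^2 K$. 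Averaging over $k \in [K]$ and writing $\bar D := \frac{1}{K}\sum_{k=1}^K\mathbb{E}\|\delta_k\|^2$ gives $\bar D \le 3\|v_0 - \nabla f(x_0)\|^2 + 6KC_\eta\,\bar D + 6C_\eta\sum_{k=1}^K\mathbb{E}\|\nabla f(x_{k-1})\|^2$. The stated $\eta_2 = \Theta\left(1/(K\zeta)\wedge\sqrt b/(\sqrt K L)\right)$ is exactly the requirement that $KC_\eta = \eta^2L^2K/b + \eta^2\zeta^2K^2$ be at most a small absolute constant, so that $6KC_\eta \le 1/2$, the $\bar D$ term on the right can be absorbed, and the claimed inequality follows with $\Theta(1)$ and $\Theta(C_\eta)$ constants.

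I expect the main obstacles to be conceptual rather than computational: (i) spotting the telescoping of $\sum_j\beta_j$ and, when summing, keeping $k-1 \le K$ inside each term so the coefficient of $\sum_k\mathbb{E}\|\nabla f(x_{k-1}^{(p)})\|^2$ comes out as the sharp $\Theta(C_\eta)$ rather than a weaker $\Theta(1/K)$; and (ii) the filtration bookkeeping needed to justify that the $\xi_j$ are genuinely orthogonal (so $\mathbb{E}\|\sum_j\xi_j\|^2 = \sum_j\mathbb{E}\|\xi_j\|^2$) and that the conditional variance of $\xi_k$ is controlled by $(L^2/b)\|x_{k-1}-x_{k-2}\|^2$. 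Everything else is routine smoothness/Cauchy--Schwarz manipulation and fixing the absolute constants hidden in $\eta_2$.
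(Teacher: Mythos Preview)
Your argument is correct and, in fact, slightly cleaner than the paper's. The difference lies mainly in how the heterogeneity (bias) term is handled. You unroll $\delta_k$ once and for all and then \emph{telescope} the bias increments: $\sum_{j\le k}\beta_j = h(x_{k-1})-h(x_0)$ with $h=\nabla f_p-\nabla f$, after which a single application of the $\zeta$-Lipschitz bound and Cauchy--Schwarz on $x_{k-1}-x_0=-\eta\sum_{i<k}v_i$ gives the $\eta^2\zeta^2 K\sum_i\|v_i\|^2$ contribution. The paper does \emph{not} telescope: it keeps the recursion $\delta_k=\delta_{k-1}+\xi_k+\beta_k$ and applies the Young-type split $\|a+b\|^2\le(1+\tfrac{1}{K})\|a\|^2+(1+K)\|b\|^2$ at every step, bounds each $\beta_{k'}$ individually by $\zeta\|x_{k'-1}-x_{k'-2}\|$ via their Lemma~\ref{lem: hetero_var}, and then unrolls, picking up an $(1+1/K)^K\le e$ factor and the product $(1+K)\cdot K=\Theta(K^2)$ in front of the per-step bias. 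Both routes land on the same coefficient $C_\eta=\eta^2L^2/b+\eta^2\zeta^2 K$ and the same threshold $\eta_2=\Theta(1/(K\zeta)\wedge\sqrt b/(\sqrt K L))$, but your telescoping makes the role of second-order heterogeneity more transparent and avoids the auxiliary Young split. Your treatment of the stochastic part (exact $L^2$-orthogonality of the $\xi_j$'s, then variance $\le (L^2/b)\|x_{j-1}-x_{j-2}\|^2$) is essentially the same as the paper's, just phrased in martingale language up front rather than inside a one-step recursion.
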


Combining Lemma \ref{lem: descent} and \ref{lem: local_grad_var_nonconvex} results in the following proposition. 
\begin{proposition}
\label{prop: local_grad_bound_nonconvex}
Suppose that Assumptions \ref{assump: heterogeneous} and  \ref{assump: local_loss_smoothness} hold. There exists $\eta_3 = \Theta(1/L \wedge 1/(K\zeta) \wedge \sqrt{b}/(\sqrt{K}L))$ such that for $\eta \leq \eta_3$, Local-Routine($p$, $x_0$, $\eta$, $v_0$, $b$, $K$)  satisfies 
\begin{align*}
    \mathbb{E}\|\nabla f(x_{\hat k}^{(p)})\|^2 \leq&\ \Theta\left(\frac{1}{\eta K}\right)(\mathbb{E}f(x_0) - \mathbb{E}f(x_K^{(p)}))\\
    &+ \Theta(1)\|v_0 - \nabla f(x_0)\|^2. 
\end{align*}
\end{proposition}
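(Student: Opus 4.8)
The plan is to combine the two preceding results by summing the per-step descent inequality of Lemma~\ref{lem: descent} over one inner loop of Local-Routine and then absorbing the residual gradient-norm term using the deviation bound of Lemma~\ref{lem: local_grad_var_nonconvex}. First I would apply Lemma~\ref{lem: descent} for each $k \in [K]$ (valid for $\eta \le \eta_1 = \Theta(1/L)$) and sum over $k$; the potential differences telescope, yielding
\[
\sum_{k=1}^K \mathbb{E}\|\nabla f(x_{k-1}^{(p)})\|^2 \le \Theta\!\left(\tfrac{1}{\eta}\right)\bigl(\mathbb{E}f(x_0) - \mathbb{E}f(x_K^{(p)})\bigr) + \tfrac{5}{2}\sum_{k=1}^K \mathbb{E}\|v_k^{(p)} - \nabla f(x_{k-1}^{(p)})\|^2 .
\]

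Next I would invoke Lemma~\ref{lem: local_grad_var_nonconvex} (valid for $\eta \le \eta_2 = \Theta(1/(K\zeta) \wedge \sqrt{b}/(\sqrt{K}L))$), multiplied through by $K$, to get $\sum_{k=1}^K \mathbb{E}\|v_k^{(p)} - \nabla f(x_{k-1}^{(p)})\|^2 \le \Theta(C_\eta K)\sum_{k=1}^K \mathbb{E}\|\nabla f(x_{k-1}^{(p)})\|^2 + \Theta(K)\|v_0 - \nabla f(x_0)\|^2$, and substitute it into the summed descent inequality. The key observation is that $C_\eta K = \eta^2 L^2 K/b + \eta^2 \zeta^2 K^2$, so further restricting to $\eta \le \eta_3 = \Theta(1/L \wedge 1/(K\zeta) \wedge \sqrt{b}/(\sqrt{K}L))$ with a sufficiently small absolute constant forces $\tfrac{5}{2}\,\Theta(C_\eta K) \le \tfrac{1}{2}$. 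Then the $\sum_{k} \mathbb{E}\|\nabla f(x_{k-1}^{(p)})\|^2$ term on the right-hand side can be moved to the left and the surviving factor $2$ absorbed, leaving $\sum_{k=1}^K \mathbb{E}\|\nabla f(x_{k-1}^{(p)})\|^2 \le \Theta(1/\eta)(\mathbb{E}f(x_0) - \mathbb{E}f(x_K^{(p)})) + \Theta(K)\|v_0 - \nabla f(x_0)\|^2$.

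Finally I would divide through by $K$. Since $\hat k \sim \mathrm{Unif}[K]$, the averaged left-hand side is $\mathbb{E}\|\nabla f(x_{\hat k}^{(p)})\|^2$ (up to the trivial reindexing $k \mapsto k-1$), while the $\Theta(K)$ prefactor on the deviation term cancels against the $1/K$, producing exactly $\Theta(1/(\eta K))(\mathbb{E}f(x_0) - \mathbb{E}f(x_K^{(p)})) + \Theta(1)\|v_0 - \nabla f(x_0)\|^2$ as claimed. I do not expect a genuine obstacle here; the only point needing care — as opposed to difficulty — is bookkeeping the absolute constants so that the quadratic-in-$\eta$ factor $C_\eta K$ is driven below the threshold required by the absorption step, which is precisely what pins down the three restrictions $1/L$, $1/(K\zeta)$, and $\sqrt{b}/(\sqrt{K}L)$ defining $\eta_3$.
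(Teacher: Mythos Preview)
Your proposal is correct and follows essentially the same route as the paper: telescope the descent inequality of Lemma~\ref{lem: descent}, substitute the variance bound of Lemma~\ref{lem: local_grad_var_nonconvex}, and choose $\eta_3$ small enough to absorb the $\Theta(C_\eta K)$ coefficient before averaging and invoking $\hat k \sim \mathrm{Unif}[K]$. The only cosmetic difference is that the paper works with the averages $\tfrac{1}{K}\sum_k$ throughout rather than the unnormalized sums, and it likewise glosses over the $k \mapsto k-1$ index shift you flagged.
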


\subsection{{Analysis of BVR-L-SGD}}
Here, we analyse BVR-L-SGD (Algorithm \ref{alg: bvr_l_sgd}). The following lemma bounds the variance of $\widetilde v_t$, which arises in Proposition \ref{prop: local_grad_bound_nonconvex}.
\begin{lemma}
\label{lem: global_grad_var_bound_nonconvex}
Suppose that Assumptions \ref{assump: heterogeneous},  \ref{assump: local_loss_smoothness} and \ref{assump: bounded_loss_gradient_variance} hold. Then, 
there exists $\eta_4 = \Theta(1/(K\zeta) \wedge \sqrt{b}/(\sqrt{K}L) \wedge \sqrt{Pb}/(\sqrt{KT}L))$ such that for $\eta \leq \eta_4$, BVR-L-SGD($\widetilde x_0$, $\eta$, $b$, $\widetilde b$, $K$, $T$, $S$) satisfies 
\begin{align*}
    & \frac{1}{T}\sum_{t=1}^T\mathbb{E}\|\widetilde v_t - \nabla f(x_{t-1})\|^2 \\
    \leq&\ \Theta\left(C_\eta'\right)\frac{1}{T}\sum_{t'=1}^T\frac{1}{P}\sum_{p=1}^P\mathbb{E}[G_{t'}^{(p)}] + \mathds{1}_{\widetilde b < \frac{n}{P}}\frac{\sigma^2}{P\widetilde b}.
\end{align*}
where $C_\eta' := \eta^2L^2KT/(Pb) + \eta^2L^2K/b + \eta^2\zeta^2K^2$, $n := \sum_{p=1}^P\#\mathrm{supp}(D_p)$ and $G_{t'}^{(p)} := \|\nabla f(x_{t'-1}^{(p), \mathrm{out}})\|^2$.
\end{lemma}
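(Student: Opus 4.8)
The plan is to treat the sequence $(\widetilde v_t)_{t\ge 0}$ as a SARAH‑type recursive estimator of $(\nabla f(x_{t-1}))_{t\ge 1}$ along the outer iterations and to exploit the resulting martingale structure. Using $\widetilde v_t = \widetilde v_0 + \sum_{\tau=1}^t(\widetilde v_\tau - \widetilde v_{\tau-1})$ together with the telescoping $\nabla f(x_{t-1}) = \nabla f(x_0) + \sum_{\tau=1}^t(\nabla f(x_{\tau-1}) - \nabla f(x_{\tau-2}))$ (recall $x_{-1}=x_0=\widetilde x_{s-1}$), I would write $\widetilde v_t - \nabla f(x_{t-1}) = (\widetilde v_0 - \nabla f(x_0)) + \sum_{\tau=1}^t D_\tau$ with $D_\tau := \frac1P\sum_{p=1}^P\bigl[(g_\tau^{(p)}(x_{\tau-1})-g_\tau^{(p)}(x_{\tau-2})) - (\nabla f_p(x_{\tau-1})-\nabla f_p(x_{\tau-2}))\bigr]$. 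Since $x_{\tau-1}$ and $x_{\tau-2}$ (including the random worker picks of the earlier rounds) are already fixed before the fresh i.i.d.\ samples defining $g_\tau^{(p)}$ are drawn, each $D_\tau$ has conditional mean zero given the history, so the summands are orthogonal in $L^2$ and $\mathbb{E}\|\widetilde v_t - \nabla f(x_{t-1})\|^2 \le \mathbb{E}\|\widetilde v_0 - \nabla f(x_0)\|^2 + \sum_{\tau=1}^t \mathbb{E}\|D_\tau\|^2$.

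Next I would bound the two pieces. For the snapshot error: if $\widetilde b \ge \frac1P\sum_p \#\mathrm{supp}(D_p)$ then $\widetilde\nabla^{(p)} = \nabla f_p(\widetilde x_{s-1})$ and $\widetilde v_0 = \nabla f(x_0)$ exactly, so this term vanishes; otherwise $\widetilde\nabla^{(p)}$ is an average of $\widetilde b$ i.i.d.\ samples, and by independence of the workers and Assumption \ref{assump: bounded_loss_gradient_variance}, $\mathbb{E}\|\widetilde v_0 - \nabla f(x_0)\|^2 \le \sigma^2/(P\widetilde b)$ — this is the $\mathds{1}_{\widetilde b < n/P}\,\sigma^2/(P\widetilde b)$ term. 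For $\mathbb{E}\|D_\tau\|^2$, conditional independence and zero mean across workers kills the cross terms, and since $g_\tau^{(p)}(x_{\tau-1}) - g_\tau^{(p)}(x_{\tau-2})$ is an average of $Kb$ i.i.d.\ terms evaluated on a common minibatch, $L$‑smoothness (Assumption \ref{assump: local_loss_smoothness}) gives the standard variance‑reduction bound $\mathbb{E}[\|D_\tau\|^2\mid\cdot] \le \frac{L^2}{PKb}\|x_{\tau-1}-x_{\tau-2}\|^2$. (The $\tau=1$ term is in fact zero since $x_0=x_{-1}$.)

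It then remains to control $\|x_{\tau-1}-x_{\tau-2}\|^2$ by the quantities $G_{t'}^{(p)}$. Here I would use the Local-Routine update rule: the chosen iterate of round $\tau-1$ satisfies $x_{\tau-1} = x_{\tau-2} - \eta\sum_{k=1}^K v_k^{(\hat p)}$, where the $v_k^{(\hat p)}$ are the SARAH iterates produced inside round $\tau-1$'s Local-Routine (with snapshot $v_0=\widetilde v_{\tau-1}$ and base point $x_{\tau-2}$). Bounding $\|\sum_k v_k^{(\hat p)}\|^2 \le K\sum_k\|v_k^{(\hat p)}\|^2$, splitting $v_k^{(p)} = (v_k^{(p)}-\nabla f(x_{k-1}^{(p)})) + \nabla f(x_{k-1}^{(p)})$, applying Lemma \ref{lem: local_grad_var_nonconvex}, and then averaging over $\hat p\sim\mathrm{Unif}[P]$ and over the internal index $\hat k\sim\mathrm{Unif}[K]$ turns $\frac1P\sum_p\sum_k\mathbb{E}\|\nabla f(x_k^{(p)})\|^2$ into $K\cdot\frac1P\sum_p\mathbb{E}[G_\tau^{(p)}]$, at the cost of a self‑referential term of order $K^2\,\mathbb{E}\|\widetilde v_{\tau-1}-\nabla f(x_{\tau-2})\|^2$. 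Putting this back yields $\mathbb{E}\|D_\tau\|^2 \lesssim \frac{\eta^2L^2K}{Pb}(C_\eta K+1)\,\frac1P\sum_p\mathbb{E}[G_\tau^{(p)}] + \frac{\eta^2L^2K}{Pb}\,\mathbb{E}\|\widetilde v_{\tau-1}-\nabla f(x_{\tau-2})\|^2$, with $C_\eta$ as in Lemma \ref{lem: local_grad_var_nonconvex}.

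Finally, I would sum $\tau$ from $1$ to $t$, average over $t\in[T]$ (using $\frac1T\sum_{t\le T}\sum_{\tau\le t}a_\tau \le \sum_{\tau\le T}a_\tau$ and shifting indices so that the $G$‑terms align with $\frac1T\sum_{t'}\frac1P\sum_p\mathbb{E}[G_{t'}^{(p)}]$ and the self‑referential terms with $\frac1T\sum_{t'}\mathbb{E}\|\widetilde v_{t'}-\nabla f(x_{t'-1})\|^2$); the self‑referential term then carries the prefactor $\Theta(\eta^2L^2KT/(Pb))$. Choosing $\eta\le\eta_4 = \Theta(1/(K\zeta)\wedge\sqrt b/(\sqrt K L)\wedge\sqrt{Pb}/(\sqrt{KT}L))$ forces this prefactor below $1/2$, so the recursion can be closed by absorbing it into the left‑hand side, and the same cap ($\eta^2L^2KT/(Pb)\le 1$) collapses $\eta^2L^2KT/(Pb)(C_\eta K+1)$ to $\Theta(C_\eta')$ with $C_\eta' = \eta^2L^2KT/(Pb)+\eta^2L^2K/b+\eta^2\zeta^2K^2$; the residual $\sigma^2$ contributions stay within a constant of $\mathds{1}_{\widetilde b<n/P}\,\sigma^2/(P\widetilde b)$. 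I expect the main difficulty to be precisely this last step: disentangling the self‑reference between $\mathbb{E}\|D_\tau\|^2$ and $\mathbb{E}\|\widetilde v_{\tau-1}-\nabla f(x_{\tau-2})\|^2$, correctly tracking the double counting and the off‑by‑one shifts between the Local-Routine iterates $x_k^{(p)}$ and the out‑points $x_{t'-1}^{(p),\mathrm{out}}$, and checking that the three caps in $\eta_4$ are exactly what closes the recursion and produces the stated $C_\eta'$.
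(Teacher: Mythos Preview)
Your proposal is correct and follows essentially the same route as the paper's proof: the SARAH martingale decomposition of $\widetilde v_t-\nabla f(x_{t-1})$, the snapshot error $\sigma^2/(P\widetilde b)$, the per-increment bound $\mathbb{E}\|D_\tau\|^2\le \tfrac{L^2}{PKb}\mathbb{E}\|x_{\tau-1}-x_{\tau-2}\|^2$, and the closure of the self-reference via the $\eta_4$ cap are all exactly as in the paper. The only cosmetic difference is that the paper packages the bound on $\|x_{t'}-x_{t'-1}\|^2$ into a separate auxiliary lemma (a one-step recursion giving $\Theta(\eta^2K^2)\tfrac1P\sum_p(V_{t'}^{(p)}+\mathbb{E}\|\nabla f(x_{t'}^{(p),\mathrm{out}})\|^2)$), whereas you obtain the same $\Theta(\eta^2K^2)$ factor inline via $\|\sum_k v_k^{(\hat p)}\|^2\le K\sum_k\|v_k^{(\hat p)}\|^2$ before splitting and invoking Lemma~\ref{lem: local_grad_var_nonconvex}; up to constants these are identical, and your index bookkeeping (in particular that the round-$(\tau{-}1)$ Local-Routine contributes $G_\tau^{(p)}=\|\nabla f(x_{\tau-1}^{(p),\mathrm{out}})\|^2$ and snapshot error $\|\widetilde v_{\tau-1}-\nabla f(x_{\tau-2})\|^2$) is correct.
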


Using the result of Local-Routine (Proposition \ref{prop: local_grad_bound_nonconvex}) and Lemma \ref{lem: global_grad_var_bound_nonconvex}, we obtain the following theorem.  
\begin{theorem}
\label{thm: nonconvex}
Suppose that Assumptions \ref{assump: heterogeneous},  \ref{assump: local_loss_smoothness}, \ref{assump: optimal_sol} and \ref{assump: bounded_loss_gradient_variance} hold. Then, 
there exists $\eta = \Theta(1/L \wedge 1/(K\zeta) \wedge \sqrt{b}/(\sqrt{K}L) \wedge \sqrt{Pb}/(\sqrt{KT}L))$ such that BVR-L-SGD($\widetilde x_0$, $\eta$, $b$, $\widetilde b$, $K$, $T$, $S$) satisfies 
\begin{align*}
    \mathbb{E}\|\nabla f(\widetilde x^{\mathrm{out}})\|^2 \leq&\  \Theta\left(\frac{1}{\eta KTS}\right)(\mathbb{E}f(\widetilde x_{0}) - f(x_*)) \\
    &+ \Theta(1)\mathds{1}_{\widetilde b < \frac{n}{P}}\frac{\sigma^2}{P\widetilde b},
\end{align*}
where $n := \sum_{p=1}^P\#\mathrm{supp}(D_p)$.
\end{theorem}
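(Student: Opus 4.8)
The plan is to chain the three ingredients already established — the per-call guarantee for Local-Routine (Proposition \ref{prop: local_grad_bound_nonconvex}), the telescoping of the descent term through the random-model-picking steps, and the bound on the outer SARAH error (Lemma \ref{lem: global_grad_var_bound_nonconvex}) — into a single self-referential inequality for the quantity $\bar A := \mathbb{E}\|\nabla f(\widetilde x^{\mathrm{out}})\|^2$, and then close it by absorbing the error term through a small choice of $\eta$. Note first that, because the final outputs are all chosen uniformly at random ($\hat s$, then $\hat t$, then $\hat p$, then $\hat k$ inside Local-Routine), $\bar A$ equals the grand average $\frac{1}{STP}\sum_{s,t,p}\mathbb{E}\|\nabla f(x_{s,t}^{(p),\mathrm{out}})\|^2$ over all the ``out'' iterates produced by the algorithm.

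First I would apply Proposition \ref{prop: local_grad_bound_nonconvex} to the call of Local-Routine at stage $s$, iteration $t$, worker $p$, which has snapshot $x_0 = x_{t-1}$ and reference vector $v_0 = \widetilde v_t$; since the $\Theta(\cdot)$ coefficients are deterministic, I condition on the history preceding that call and take total expectation to get
\begin{align*}
  \mathbb{E}\|\nabla f(x_t^{(p),\mathrm{out}})\|^2 \leq \Theta\!\left(\tfrac{1}{\eta K}\right)\big(\mathbb{E}f(x_{t-1}) - \mathbb{E}f(x_t^{(p)})\big) + \Theta(1)\,\mathbb{E}\|\widetilde v_t - \nabla f(x_{t-1})\|^2 .
\end{align*}
The crucial point here is the \emph{random-picking} rule $x_t = x_t^{(\hat p)}$, $\hat p\sim\mathrm{Unif}[P]$: averaging this inequality over $p$ turns $\tfrac1P\sum_p \mathbb{E}f(x_t^{(p)})$ into $\mathbb{E}f(x_t)$ \emph{exactly}, whereas model averaging would force us to invoke convexity of $f$ at this step. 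After averaging over $p$, I sum over $t=1,\dots,T$ (the descent terms telescope since $x_0 = \widetilde x_{s-1}$ and $x_T = \widetilde x_s$), then over $s=1,\dots,S$ (telescoping again and using $\mathbb{E}f(\widetilde x_S)\geq f(x_*)$ from Assumption \ref{assump: optimal_sol}), and divide by $STP$ to obtain
\begin{align*}
  \bar A \leq \Theta\!\left(\tfrac{1}{\eta KTS}\right)\big(\mathbb{E}f(\widetilde x_0) - f(x_*)\big) + \Theta(1)\,\frac{1}{ST}\sum_{s=1}^{S}\sum_{t=1}^{T}\mathbb{E}\|\widetilde v_{t} - \nabla f(x_{t-1})\|^2 .
\end{align*}

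Next I would invoke Lemma \ref{lem: global_grad_var_bound_nonconvex} on each stage $s$ and sum over $s$, which bounds the outer-SARAH error $\tfrac1T\sum_t\mathbb{E}\|\widetilde v_t - \nabla f(x_{t-1})\|^2$ by $\Theta(C_\eta')$ times the same average of squared gradients at the ``out'' iterates, i.e.\ by $\Theta(C_\eta')\bar A$ (up to the one-index bookkeeping in $G_{t'}^{(p)}$), plus the statistical floor $\mathds{1}_{\widetilde b<n/P}\,\sigma^2/(P\widetilde b)$ coming from the initial large-batch snapshot. Substituting gives the self-referential bound
\begin{align*}
  \bar A \leq \Theta\!\left(\tfrac{1}{\eta KTS}\right)\big(\mathbb{E}f(\widetilde x_0) - f(x_*)\big) + \Theta(C_\eta')\,\bar A + \Theta(1)\,\mathds{1}_{\widetilde b < \frac{n}{P}}\frac{\sigma^2}{P\widetilde b}.
\end{align*}
It then remains to pick the absolute constant in $\eta$ small enough that $\Theta(C_\eta')\leq \tfrac12$, i.e.\ that each of $\eta^2L^2KT/(Pb)$, $\eta^2L^2K/b$, $\eta^2\zeta^2K^2$ is a small constant; these are exactly the constraints $\eta\lesssim \sqrt{Pb}/(\sqrt{KT}L)$, $\eta\lesssim\sqrt{b}/(\sqrt{K}L)$, $\eta\lesssim 1/(K\zeta)$, which, together with the $\eta=\Theta(1/L)$ threshold inherited from Proposition \ref{prop: local_grad_bound_nonconvex}, give precisely the stated $\eta=\Theta(1/L\wedge 1/(K\zeta)\wedge\sqrt{b}/(\sqrt{K}L)\wedge\sqrt{Pb}/(\sqrt{KT}L))$. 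Moving $\Theta(C_\eta')\bar A$ to the left and multiplying by $2$ yields the claim.

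The step I expect to be the main obstacle is the bookkeeping that makes the absorption legitimate: one must verify that the squared-gradient quantities produced on the right of Lemma \ref{lem: global_grad_var_bound_nonconvex} are exactly those appearing on the left of the telescoped descent inequality — handling the index shift $G_{t'}^{(p)}=\|\nabla f(x_{t'-1}^{(p),\mathrm{out}})\|^2$ and the fact that the two inequalities live on different (nested) filtrations — so that ``$\bar A$ on both sides'' closes without leaving uncontrolled boundary terms. A secondary point requiring care is that the single $\eta$ above must simultaneously respect the thresholds $\eta_3$ of Proposition \ref{prop: local_grad_bound_nonconvex} and $\eta_4$ of Lemma \ref{lem: global_grad_var_bound_nonconvex}, which is why the final rate carries the minimum of all four scales.
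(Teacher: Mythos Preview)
Your proposal is correct and follows essentially the same route as the paper: apply Proposition \ref{prop: local_grad_bound_nonconvex} per call, use the random-picking rule to turn $\tfrac1P\sum_p\mathbb{E}f(x_t^{(p)})$ into $\mathbb{E}f(x_t)$ so the descent telescopes over $t$ and $s$, bound the outer error via Lemma \ref{lem: global_grad_var_bound_nonconvex}, and absorb $\Theta(C_\eta')\bar A$ by the stated choice of $\eta$. The only cosmetic difference is ordering --- the paper closes the self-referential inequality within each stage $s$ before summing over $s$, whereas you sum first and close once --- but the computation and the resulting constraints on $\eta$ are identical.
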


Theorem \ref{thm: nonconvex} immediately implies the following corollary which characterises the communication complexity of BVR-L-SGD. 
\begin{corollary}
\label{cor: complexity_nonconvex}
Suppose that Assumptions \ref{assump: heterogeneous},  \ref{assump: local_loss_smoothness}, \ref{assump: optimal_sol} and \ref{assump: bounded_loss_gradient_variance} hold. We denote $n = \sum_{p=1}^P\# \mathrm{supp}(D_p)$. Let $\widetilde b = \Theta((n/P) \wedge (\sigma^2/(P\varepsilon)))$. Then, 
there exists $\eta = \Theta(1/L \wedge 1/(K\zeta) \wedge \sqrt{b}/(\sqrt{K}L) \wedge \sqrt{Pb}/(\sqrt{KT}L)))$ such that BVR-L-SGD($\widetilde x_0$, $\eta$, $b$, $\widetilde b$, $K$, $T$, $S$) with $S = \Theta(1 + 1/(\eta KT\varepsilon))$ satisfies 
$\mathbb{E}\|\nabla f(\widetilde x^{\mathrm{out}})\|^2 \leq \Theta(\varepsilon)$ with communication complexity  
\begin{align*}
    ST = \Theta\left(\frac{L}{K\varepsilon} + \frac{\zeta}{\varepsilon} + \frac{L}{\sqrt{Kb}\varepsilon} + \sqrt{\frac{T}{KbP}}\frac{L}{\varepsilon} + T\right).
\end{align*}

%and the total computational complexity per worker $ST(Kb + \widetilde b/T)$ is
%\begin{align*}
%    \Theta\left(\left(\frac{L}{K\varepsilon} + \frac{\zeta}{\varepsilon} + \frac{L}{\sqrt{Kb}\varepsilon} + \sqrt{\frac{T}{KbP}}\frac{L}{\varepsilon}\right)\left(Kb + \frac{\frac{n}{P}\wedge \frac{\sigma^2}{P\varepsilon}}{T}\right)\right).
%\end{align*}
\end{corollary}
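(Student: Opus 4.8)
The plan is to derive the corollary as a direct corollary of Theorem~\ref{thm: nonconvex}: we simply instantiate the free parameters $\widetilde b$ and $S$ so that each of the two terms in the bound of Theorem~\ref{thm: nonconvex} is $\Theta(\varepsilon)$, and then expand the prescribed step size to read off $ST$. Write $\Delta_0 := \mathbb{E}f(\widetilde x_0) - f(x_*) = \Theta(1)$. Theorem~\ref{thm: nonconvex} gives, for $\eta = \Theta(1/L \wedge 1/(K\zeta) \wedge \sqrt{b}/(\sqrt{K}L) \wedge \sqrt{Pb}/(\sqrt{KT}L))$,
\begin{align*}
\mathbb{E}\|\nabla f(\widetilde x^{\mathrm{out}})\|^2 \le \Theta\!\left(\frac{\Delta_0}{\eta KTS}\right) + \Theta(1)\,\mathds{1}_{\widetilde b < n/P}\,\frac{\sigma^2}{P\widetilde b},
\end{align*}
so it suffices to bound both summands by a constant multiple of $\varepsilon$.

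\textbf{Choice of $\widetilde b$ (statistical term).} Set $\widetilde b = \Theta\big((n/P)\wedge(\sigma^2/(P\varepsilon))\big)$. If $\sigma^2/(P\varepsilon) \ge n/P$, then $\widetilde b = \Theta(n/P)$, the indicator $\mathds{1}_{\widetilde b < n/P}$ vanishes, and the second term is $0$; otherwise $\widetilde b = \Theta(\sigma^2/(P\varepsilon))$ and the term equals $\Theta(\sigma^2/(P\widetilde b)) = \Theta(\varepsilon)$. In both the offline and online regimes the statistical term is thus $\Theta(\varepsilon)$. \textbf{Choice of $S$ (optimization term).} Requiring $\Theta(\Delta_0/(\eta KTS)) \le \varepsilon$ forces $S = \Theta\big(\Delta_0/(\eta KT\varepsilon)\big)$; adding a constant $1$ so that $S \ge 1$ always holds gives $S = \Theta(1 + 1/(\eta KT\varepsilon))$ (the regime $1/(\eta KT\varepsilon) \lesssim 1$ is precisely when one stage already suffices). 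Combining the two choices yields $\mathbb{E}\|\nabla f(\widetilde x^{\mathrm{out}})\|^2 \le \Theta(\varepsilon)$, as claimed.

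\textbf{Communication complexity.} By the Remark, the communication complexity is $\Theta(ST)$, and with the above $S$,
\begin{align*}
ST = \Theta\!\left(T + \frac{1}{\eta K\varepsilon}\right).
\end{align*}
It remains to expand $1/\eta$. From $\eta = \Theta\big(1/L \wedge 1/(K\zeta) \wedge \sqrt{b}/(\sqrt{K}L) \wedge \sqrt{Pb}/(\sqrt{KT}L)\big)$ we get $1/(\eta K) = \Theta\big(L/K \vee \zeta \vee L/\sqrt{Kb} \vee \sqrt{T/(KbP)}\,L\big)$, and using that $x_1\vee\cdots\vee x_m = \Theta(x_1+\cdots+x_m)$ for nonnegative $x_i$,
\begin{align*}
\frac{1}{\eta K\varepsilon} = \Theta\!\left(\frac{L}{K\varepsilon} + \frac{\zeta}{\varepsilon} + \frac{L}{\sqrt{Kb}\,\varepsilon} + \sqrt{\frac{T}{KbP}}\,\frac{L}{\varepsilon}\right),
\end{align*}
which together with the additive $T$ gives the stated expression for $ST$.

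\textbf{Main obstacle.} There is essentially no analytical obstacle here: all the work is already carried out in Lemmas~\ref{lem: descent}–\ref{lem: global_grad_var_bound_nonconvex} and Theorem~\ref{thm: nonconvex}. The only points needing care are bookkeeping: (i) the case split in the choice of $\widetilde b$, so that a single formula $\widetilde b = \Theta((n/P)\wedge(\sigma^2/(P\varepsilon)))$ simultaneously kills the statistical term in the online regime and turns off the indicator $\mathds{1}_{\widetilde b < n/P}$ in the offline regime; (ii) the additive ``$+1$'' in $S$, ensuring $S\ge 1$ and a graceful bound when $\varepsilon$ is not small; and (iii) checking admissibility of $\eta$ — but the $\eta$ in the statement is, up to constants, exactly the threshold $\eta_4$ (hence also $\le \eta_3$) appearing in Theorem~\ref{thm: nonconvex}, so this is automatic.
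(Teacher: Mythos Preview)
Your proposal is correct and is exactly the (implicit) argument the paper has in mind: the paper simply states that Theorem~\ref{thm: nonconvex} ``immediately implies'' the corollary without spelling out the deduction, and your write-up fills in precisely those steps—choosing $\widetilde b$ to kill the $\sigma^2/(P\widetilde b)$ term, choosing $S = \Theta(1 + 1/(\eta KT\varepsilon))$ to kill the optimization term, and then expanding $1/(\eta K\varepsilon)$ using the four-way minimum defining $\eta$. There is no alternative route here, and your bookkeeping (including the $\vee$-to-$+$ conversion and the additive $T$ from $ST = \Theta(T + 1/(\eta K\varepsilon))$) is clean and correct.
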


\begin{remark}[Communication efficiency]
Given local computation budget $\mathcal B$, we set $T = \Theta(1 + \widetilde b/\mathcal B)$ and $Kb = \Theta(\mathcal B)$ with $b \leq \Theta(\sqrt{\mathcal B})$, where $\widetilde b$ was defined in Corollary \ref{cor: complexity_nonconvex}. Then, we have the averaged number of local computations per communication round $Kb + \widetilde b / T = \Theta(\mathcal B)$ and the total communication complexity with budget $\mathcal B$ becomes $\Theta((L/(\sqrt{\mathcal B}\varepsilon) + \sqrt{n\wedge (\sigma^2/\varepsilon)}L/(\mathcal BP\varepsilon) + (n\wedge (\sigma^2/\varepsilon))/(\mathcal B P) +  \zeta/\varepsilon)$. 
\end{remark}

\section{Numerical Resutls}
In this section, we provide several experimental results to verify our theoretical findings. \par
We conducted a ten-class classification on CIFAR10\footnotemark\footnotetext{\url{https://www.cs.toronto.edu/~kriz/cifar.html}.} dataset. Several heterogeneity patterns of local datasets were artificially created. For each heterogeneity, we compared the empirical performances of our method and several existing methods.\par
{\bf{Data Preparation. }}We first equalized the number of data per class by randomly removing the excess data for both the train and test datasets for only simplicity. Then, for fixed $q \in \{0.1, 0.35, 0.6, 0.85\}$, $q \times 100$ \% of the data of class $p$ was assigned to worker $p$ for $p \in [P]$. Here, we set the number of workers to the number of classes. Then, for each class $p$, we equally divided the remained $(1 - q) \times 100$ \% data of class $p$ into $P-1$ sets and distributed them to correspondence worker $p'\neq p$. As a result, we obtained several patterns of class imbalanced local datasets with various heterogeneity (we expect smaller heterogeneity for smaller $q$ and particularly $\zeta \approx 0$ when $q=0.1$ since $P=10$). An illustration of this process for $P=3$ is given in Figure \ref{fig: data_preparation}. From this process, we fixed the number of workers $P$ to ten. Finally, we normalized each channel of the data to be mean and standard deviation $0.5$. 

\begin{figure}[t]
  \includegraphics[width=8.0cm]{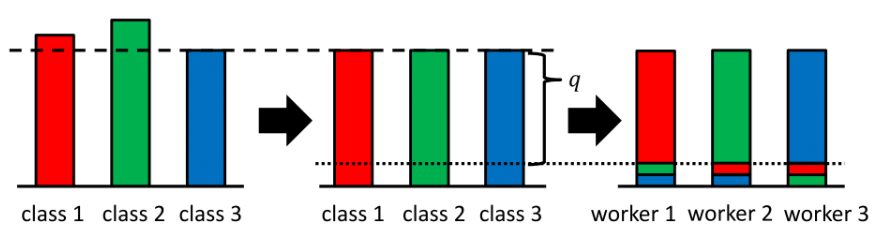}
  \caption{An illustraion of our local datasets generation process. }
  \label{fig: data_preparation}
\end{figure}

\par
{\bf{Models. }}We conducted our experiments using an one-hidden layer fully connected neural network with $100$ hidden units and softplus activation. For loss function, we used the standard cross-entropy loss. We initialized parameters by uniformly sampling the parameters from $[-\sqrt{6/(n_{\mathrm{in}} + n_{\mathrm{out}})}, \sqrt{6/(n_{\mathrm{in}} + n_{\mathrm{out}})}]$ \cite{glorot2010understanding}, where $n_{\mathrm{in}}$ and $n_{\mathrm{out}}$ are the number of units in the input and output layers respectively. Furthermore, we add $L_2$-regularizer to the empirical risk with fixed regularization parameter $5\times10^{-3}$. 
\par
{\bf{Implemented Algorithms. }} We implemented minibatch SGD, Local SGD, SARAH, SCAFFOLD and our BVR-L-SGD. For each local computation budget $\mathcal B \in \{256, 512, 1024\}$, we set $K = \mathcal B/16$ and $b = 16$ for local methods (Local SGD, SCAFFOLD and BVR-L-SGD), and $b = \mathcal B$ for non-local ones (minibatch SGD and SARAH). Note that each algorithm requires the same order of stochastic gradient computations per communication. For each algorithm, we tuned learning rate $\eta$ from $\{0.005, 0.01, 0.05, 0.1, 0.5, 1.0\}$. The details of the tuning procedure are found in the supplementary material (Section \ref{app_sec: experiment}).
\par

{\bf{Evaluation. }}
We compared the implemented algorithms using four criteria of train loss;  train accuracy; test loss and test accuracy against (i) heterogeneity $q$; (ii) local computation budgets $\mathcal B$ and (iii) the number of communication rounds. The total number of communication rounds was fixed to $3,000$ for each algorithm. We independently repeated the experiments $5$ times and report the mean and standard deviation of the above criteria. Due to the space limitation, we will only report train loss and test accuracy in the main paper. The full results are found in the supplementary material (Section \ref{app_sec: experiment}). 
\par

{\bf{Results 1: Effect of the heterogeneity. }} Here, we investigate the effect of the heterogeneity on the convergence speed of the algorithms. To clarify the pure effect of the heterogeneity, we fixed the local computation budget $\mathcal B$ to $1,024$, which was the largest one in our experiments. Figure \ref{fig: by_heterogeneities} shows the comparison of the best-achieved train loss and test accuracy in $3,000$ communication rounds against heterogeneity parameter $q$. From this, we can see that the convergence speed of the local methods deteriorated as heterogeneity parameter $q$ increased. Particularly, the degree of the performance degradation of L-SGD and SCAFFOLD was serious. In contrast, this phenomenon was not observed for non-local methods, because the convergence rates of non-local methods do not depend on heterogeneity $\zeta$ as in Table \ref{tab: theoretical_comparison}.  Importantly, even for the largest $q$, BVR-L-SGD significantly outperformed the other methods.  \par
\begin{figure}[t]
\begin{subfigmatrix}{2}
\subfigure[Best Train Loss]{\includegraphics[width=4cm]{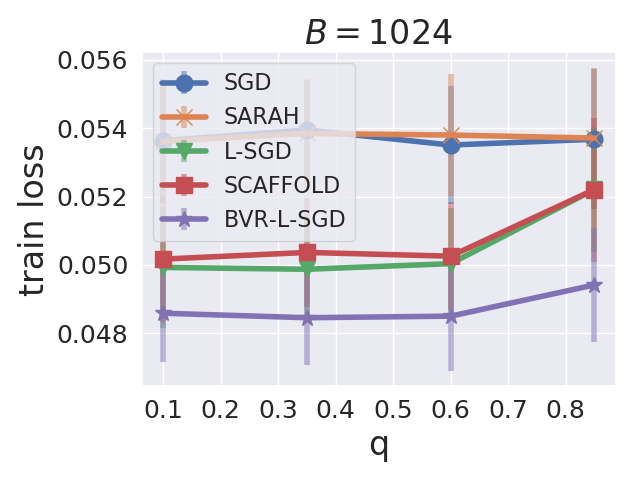}}
\subfigure[Best Test Accuracy]{\includegraphics[width=4cm]{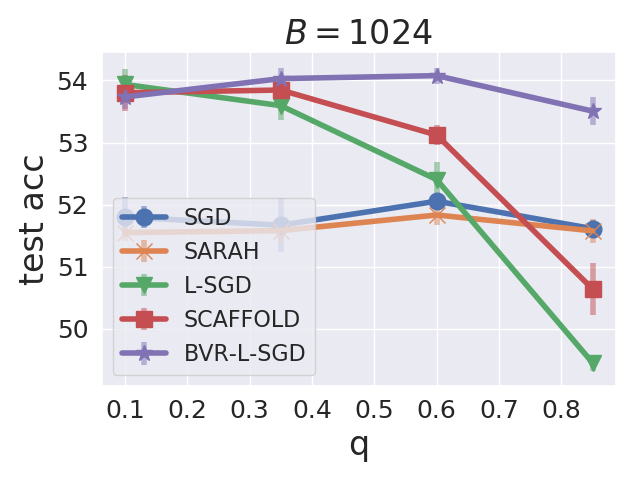}}
\end{subfigmatrix}
\caption{Comparison of the best train loss and test accuracy against heterogeneity parameter $q$. }
\label{fig: by_heterogeneities}
\end{figure}
{\bf{Results 2: Effect of the local computation budget size. }} Now, we examine the effect of the size of the local computation budget $\mathcal B$ to the convergence speed. For this purpose, we fixed heterogeneity parameter $q$ to the smallest one. Figure \ref{fig: by_budgets} shows the comparison of the best-achieved train loss and test accuracy against local computation budget $\mathcal B$. We can see that the local methods improved their performances as local computation budget $\mathcal B$ increased, but non-local methods did not. This is because local methods can potentially achieve a smaller communication complexity than $1/\varepsilon$ by increasing $\mathcal B$ for small $\zeta$, but non-local methods can not break the barrier of $1/\varepsilon$ for any $\mathcal B$ as in Table \ref{tab: theoretical_comparison}. \par
\begin{figure}[t]
\begin{subfigmatrix}{2}
\subfigure[Best Train Loss]{\includegraphics[width=4cm]{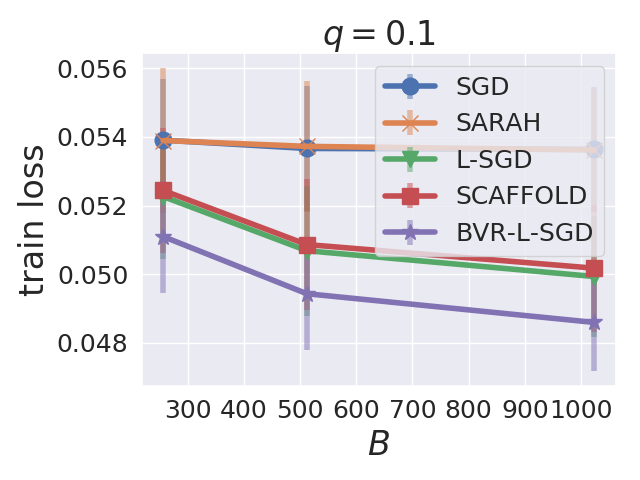}}
\subfigure[Best Test Accuracy]{\includegraphics[width=4cm]{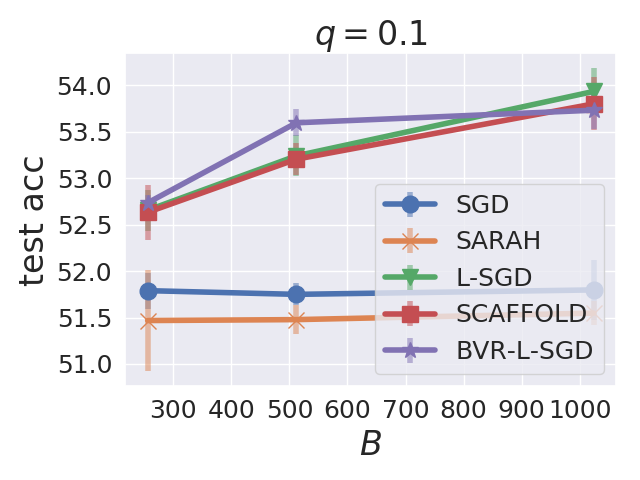}}
\end{subfigmatrix}
\caption{Comparison of the best train loss and test accuracy against local computation budget $\mathcal B$. }
\label{fig: by_budgets}
\end{figure}

{\bf{Results 3: Effect of the number of communication rounds.}} Finally, to see the trends of train loss and test accuracy during optimization, we give the comparison of the train loss and test accuracy against the number of communication rounds (Figure \ref{fig: by_rounds}). For the space limitation, we only report the case $\mathcal B = 1,024$. From these results, it can be seen that our proposed BVR-L-SGD consistently outperformed the other methods from beginning to end. \par

\begin{figure}[t]
\begin{subfigmatrix}{2}
\subfigure[Train Loss]{\includegraphics[width=4cm]{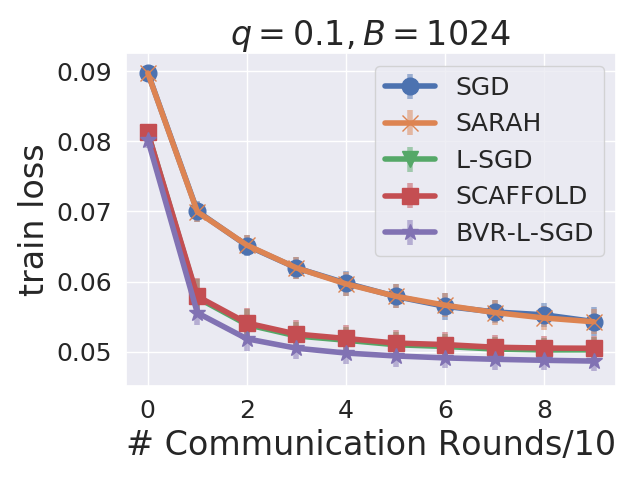}}
\subfigure[Test Accuracy]{\includegraphics[width=4cm]{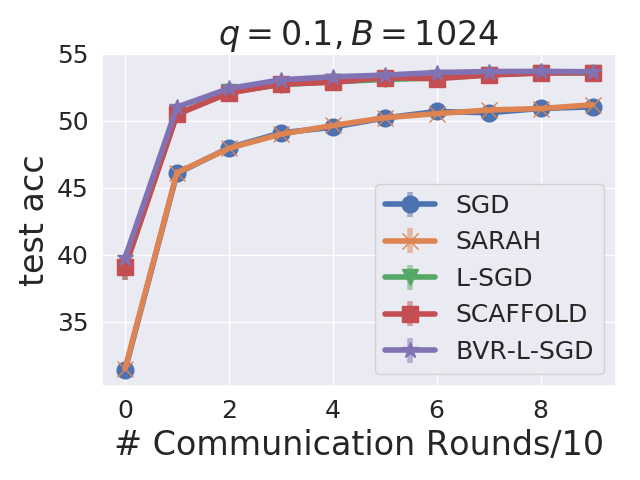}}
\end{subfigmatrix}
\begin{subfigmatrix}{2}
\subfigure[Train Loss]{\includegraphics[width=4cm]{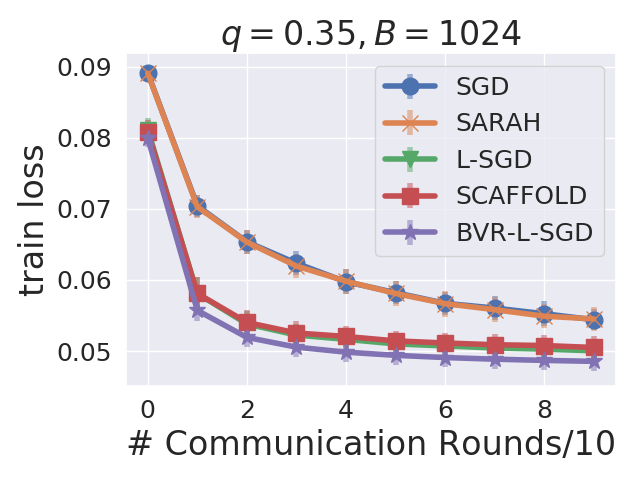}}
\subfigure[Test Accuracy]{\includegraphics[width=4cm]{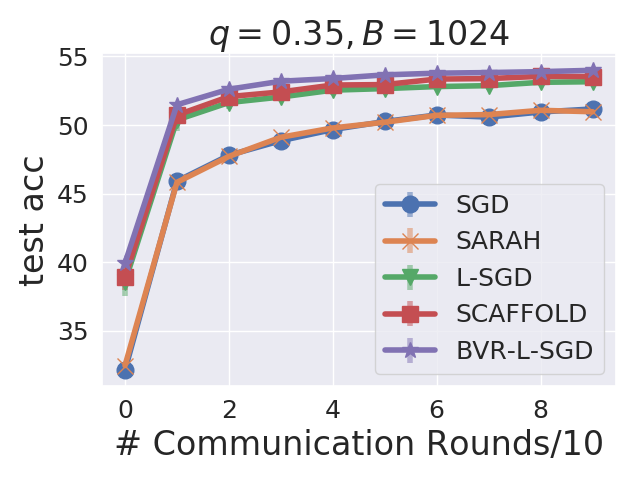}}
\end{subfigmatrix}
\begin{subfigmatrix}{2}
\subfigure[Train Loss]{\includegraphics[width=4cm]{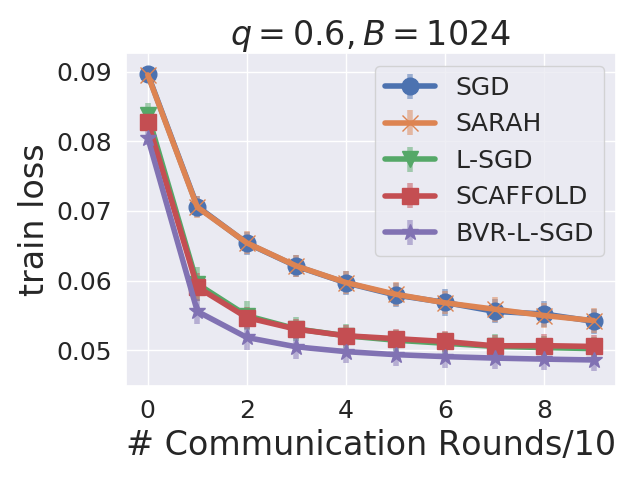}}
\subfigure[Test Accuracy]{\includegraphics[width=4cm]{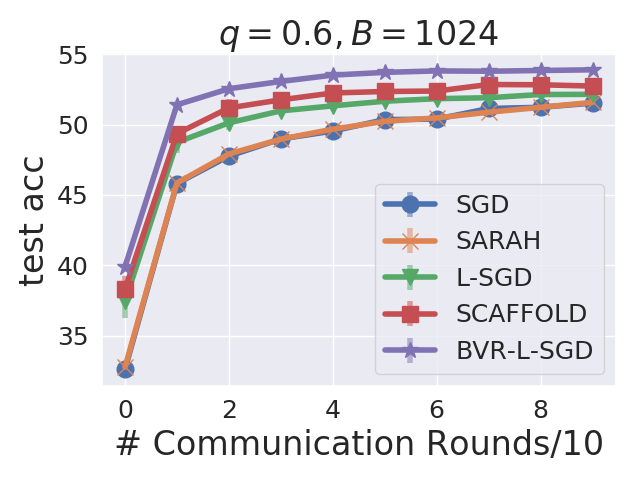}}
\end{subfigmatrix}
\begin{subfigmatrix}{2}
\subfigure[Train Loss]{\includegraphics[width=4cm]{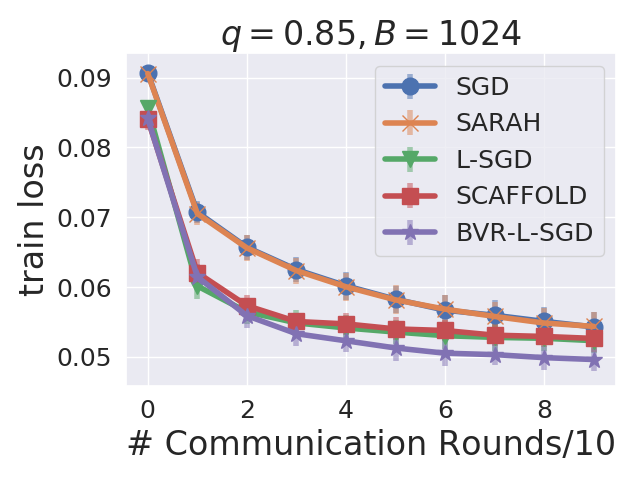}}
\subfigure[Test Accuracy]{\includegraphics[width=4cm]{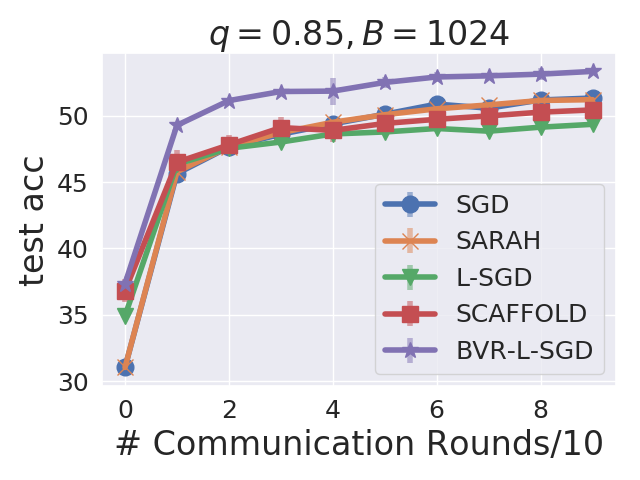}}
\end{subfigmatrix}
\caption{Comparison of the train loss and test accuracy against the number of communication rounds. }
\label{fig: by_rounds}
\end{figure}
In summary, for small heterogeneity, local methods significantly surpassed non-local methods. For relatively large heterogeneity, the performances of the existing local methods were seriously degraded. In contrast, the degree of deterioration of BVR-L-SGD was small and BVR-L-SGD consistently out-performed both the existing non-local and local methods. These observations strongly verify the theoretical findings and showed the empirical superiority of our method.

\section{Conclusion and Future Work}
In this paper, we studied our proposed BVR-L-SGD for nonconvex distributed learning, which is based on the bias-variance reduced gradient estimator to fully utilize the small second-order heterogeneity of local objectives and suggests randomly picking up one of the local models instead of taking the average of them when workers are synchronized. Our theory implies the superiority of BVR-L-SGD to previous non-local and local methods in terms of communication complexity.   The numerical results strongly encouraged our theoretical results and suggested the empirical superiority of the proposed method. \par
One promising and challenging future work is to extend our algorithm and theory to the problem of finding second-order stationary points. Although there are many papers for finding second-order stationary points for general nonconvex problems \cite{ge2015escaping, allen2017natasha, jin2017escape, li2019ssrgd}, it might be inherently difficult for local algorithms to efficiently find a local minima due to the nature of local updates. An open question is that: can we construct a local algorithm that guarantees to find second-order stationary points and is more communication efficient than non-local methods for local objectives with small heterogeneity?

\section*{Acknowledgement}
TS was partially supported by JSPS KAKENHI (18K19793, 18H03201, and 20H00576), Japan
DigitalDesign, and JST CREST.

\bibliography{main}
\bibliographystyle{icml2021}

\appendix

\onecolumn

\section{Analysis of Local-Routine (Algorithm \ref{alg: local_update})}
\label{app_sec: local_routine}
In this section, we give the analysis of Local-Routine. 
\subsection*{Proof of Lemma \ref{lem: descent}}

Let $\eta_1 = 1/(4L)$.
From $L$-smoothness of $f$, we have
\begin{align*}
    f(x_k^{(p)}) \leq&\ f(x_{k-1}^{(p)}) + \langle \nabla f(x_{k-1}^{(p)}), x_k^{(p)} - x_{k-1}^{(p)}\rangle + \frac{L}{2}\|x_k^{(p)} - x_{k-1}^{(p)}\|^2 \\
    =&\ f(x_{k-1}^{(p)}) - \eta \langle \nabla f(x_{k-1}^{(p)}), v_k^{(p)}\rangle + \frac{\eta^2 L}{2}\|v_k^{(p)}\|^2 \\
    \leq&\ f(x_{k-1}^{(p)}) - \eta\|\nabla f(x_{k-1}^{(p)})\|^2 - \eta \langle \nabla f(x_{k-1}^{(p)}), v_k^{(p)} - \nabla f(x_{k-1}^{(p)})\rangle + \eta^2L\|v_k^{(p)} - \nabla f(x_{k-1}^{(p)})\|^2 \\
    &+ \eta^2L\|\nabla f(x_{k-1}^{(p)})\|^2 \\
    \leq&\  f(x_{k-1}^{(p)}) - \eta\left(\frac{3}{4} - \frac\eta L\right)\|\nabla f(x_{k-1}^{(p)})\|^2 + \eta\left(1 + \eta L\right)\|v_k^{(p)} - \nabla f(x_{k-1}^{(p)})\|^2 \\
    \leq&\ f(x_{k-1}^{(p)}) - \frac{\eta}{2}\|\nabla f(x_{k-1}^{(p)})\|^2 + \eta\left(1 + \eta L\right)\|v_k^{(p)} - \nabla f(x_{k-1}^{(p)})\|^2.
\end{align*}
Here, in the second and third inequalities we used Cauchy Schwarz inequality and Arithmetic Mean-Geometric Mean inequality. The last inequality holds because $\eta \leq 1/(4L)$. Hence, we get
\begin{align*}
    \|\nabla f(x_{k-1}^{(p)})\|^2 \leq&\ \frac{2}{\eta}(f(x_{k-1}^{(p)}) - f(x_k^{(p)})) + 2(1 + \eta L)\|v_k^{(p)} - \nabla f(x_{k-1}^{(p)})\|^2 \\
    \leq&\ \frac{2}{\eta}(f(x_{k-1}^{(p)}) - f(x_k^{(p)})) + \frac{5}{2}\|v_k^{(p)} - \nabla f(x_{k-1}^{(p)})\|^2.
\end{align*}
Finally, taking expectation on both sides yields the desired result. 
\qed

\begin{lemma}\label{lem: sol_dev}
Local-Routine($p$, $x_0$, $\eta$, $v_0$, $b$, $K$) satisfies  for $k \in [K]$,
\begin{align*}
    \mathbb{E}\|x_k^{(p)} - x_0\|^2 \leq \Theta(\eta^2K^2)\frac{1}{K}\sum_{k'=1}^K \mathbb{E}\|v_{k'}^{(p)} - \nabla f(x_{k'-1}^{(p)})\|^2 + \Theta(\eta^2 K^2) \frac{1}{K} \sum_{k'=1}^K \mathbb{E}\|\nabla f(x_{k'-1}^{(p)})\|^2.
\end{align*}
for $p \in [P]$ and $k \in [K]$.
\end{lemma}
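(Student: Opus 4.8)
\textbf{Proof proposal for Lemma \ref{lem: sol_dev}.} The plan is to unroll the update rule and apply two elementary inequalities. First I would observe that, since $x_0^{(p)} = x_0$ and $x_{k}^{(p)} = x_{k-1}^{(p)} - \eta v_k^{(p)}$ for each $k \in [K]$, telescoping gives the closed form $x_k^{(p)} - x_0 = -\eta \sum_{j=1}^{k} v_j^{(p)}$. Hence, by the Cauchy--Schwarz inequality (in the form $\|\sum_{j=1}^k a_j\|^2 \le k \sum_{j=1}^k \|a_j\|^2$) and then extending the sum from $k \le K$ terms to all $K$ terms,
\begin{align*}
    \|x_k^{(p)} - x_0\|^2 = \eta^2 \Bigl\|\sum_{j=1}^{k} v_j^{(p)}\Bigr\|^2 \le \eta^2 k \sum_{j=1}^{k} \|v_j^{(p)}\|^2 \le \eta^2 K \sum_{j=1}^{K} \|v_j^{(p)}\|^2.
\end{align*}

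Next I would split each $v_j^{(p)}$ into its deviation from the full gradient and the full gradient itself, $v_j^{(p)} = \bigl(v_j^{(p)} - \nabla f(x_{j-1}^{(p)})\bigr) + \nabla f(x_{j-1}^{(p)})$, and apply $\|a+b\|^2 \le 2\|a\|^2 + 2\|b\|^2$ termwise. This yields
\begin{align*}
    \|x_k^{(p)} - x_0\|^2 \le 2\eta^2 K \sum_{j=1}^{K}\|v_j^{(p)} - \nabla f(x_{j-1}^{(p)})\|^2 + 2\eta^2 K \sum_{j=1}^{K}\|\nabla f(x_{j-1}^{(p)})\|^2.
\end{align*}
Rewriting $K \sum_{j=1}^K (\cdot) = K^2 \cdot \frac{1}{K}\sum_{j=1}^K (\cdot)$ and taking expectation on both sides gives exactly the claimed bound with $\Theta(\eta^2 K^2) = 2\eta^2 K^2$ in front of each term.

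There is no substantive obstacle here: the argument is purely algebraic, uses only the explicit form of the iterates and two standard norm inequalities, and requires none of Assumptions \ref{assump: heterogeneous}--\ref{assump: bounded_loss_gradient_variance}. The only minor point to be careful about is the index bookkeeping — that $x_k^{(p)}$ depends on $v_1^{(p)},\dots,v_k^{(p)}$ (not $v_0^{(p)}$), and that enlarging the range of summation from $k$ to $K$ is legitimate because every summand is nonnegative — but this is immediate. The lemma will later be combined with Lemma \ref{lem: local_grad_var_nonconvex} to control $\|x_k^{(p)} - x_0\|$ in terms of the accumulated gradient norms along the local trajectory.
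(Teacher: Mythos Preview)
Your proof is correct, but it takes a different route from the paper's. You telescope the update rule directly to obtain $x_k^{(p)} - x_0 = -\eta\sum_{j=1}^k v_j^{(p)}$ and then apply Cauchy--Schwarz once to the whole sum. The paper instead derives a one-step recursion
\[
\|x_k^{(p)} - x_0\|^2 \le \bigl(1+\tfrac{1}{K}\bigr)\|x_{k-1}^{(p)} - x_0\|^2 + 2\eta^2(1+K)\|v_k^{(p)} - \nabla f(x_{k-1}^{(p)})\|^2 + 2\eta^2 K\|\nabla f(x_{k-1}^{(p)})\|^2,
\]
unrolls it, and bounds the geometric factor via $(1+1/K)^K \le e$. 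Your approach is more elementary and yields the slightly sharper constant $2\eta^2 K^2$ versus the paper's $2e\eta^2(1+K)K$; the paper's recursive template is the standard ``drift'' argument one sees in local-SGD analyses, which generalizes more readily when the increment is not simply $-\eta v_k^{(p)}$, but here that generality is not needed and your direct telescoping is cleaner.
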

\begin{proof}
\begin{align*}
    &\|x_k^{(p)} - x_0\|^2 \\
    =&\ \|x_{k-1}^{(p)} - x_0 + \eta v_k^{(p)}\|^2 \\
    \leq&\ \left(1+\frac{1}{K}\right)\|x_{k-1}^{(p)} - x_0\|^2 + \eta^2 (1+K)\|v_k^{(p)}\|^2 \\
    \leq&\ \left(1+\frac{1}{K}\right)\|x_{k-1}^{(p)} - x_0\|^2 + 2\eta^2 (1+K)\|v_k^{(p)} - \nabla f(x_{k-1}^{(p)})\|^2 + 2\eta^2 K \|\nabla f(x_{k-1}^{(p)})\|^2. 
\end{align*}
Here, the inequality follows from Cauchy Schwarz inequality and Arithmetic Mean-Geometric Mean inequality.  
Recursively using this inequality, we obtain
\begin{align*}
    &\|x_k^{(p)} - x_0\|^2 \\
    \leq&\  2\eta^2(1+K) \sum_{k'=1}^k \left(1+\frac{1}{K}\right)^{k-k'}\|v_{k'}^{(p)} - \nabla f(x_{k'-1}^{(p)})\|^2 + 2\eta^2 (1+K) \sum_{k'=1}^k \left(1+\frac{1}{K}\right)^{k-k'}\|\nabla f(x_{k'-1}^{(p)})\|^2 \\
    \leq&\ 2e\eta^2(1+K) \sum_{k'=1}^K \|v_{k'}^{(p)} - \nabla f(x_{k'-1}^{(p)})\|^2 + 2e\eta^2 (1+K) \sum_{k'=1}^K \|\nabla f(x_{k'-1}^{(p)})\|^2.
\end{align*}
Here, we used the fact that $(1+1/K)^{k-k'} \leq (1+1/K)^K \leq e$ and the definition $x_0^{(p)} = x_0$.
\end{proof}

\begin{lemma}\label{lem: hetero_var}
Suppose that Assumptions and \ref{assump: heterogeneous} hold. Then, for any $x, y \in \mathbb{R}^d$, 
$$\|\nabla f_p(x) - \nabla f_p(y) + \nabla f(y) - \nabla f(x)\|^2 \leq \zeta^2\|x -y\|^2$$
for $p \in [P]$.
\end{lemma}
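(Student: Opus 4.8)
The plan is to read the left-hand side as a statement about the Lipschitz continuity of the gradient of the discrepancy function $h_p := f_p - f$, and to control that gradient through a pointwise Hessian bound. First I would set $h_p(z) := f_p(z) - f(z)$, so that $\nabla h_p(z) = \nabla f_p(z) - \nabla f(z)$ and the quantity to be bounded is precisely $\|\nabla h_p(x) - \nabla h_p(y)\|^2$. Using $f = \frac{1}{P}\sum_{p'=1}^P f_{p'}$ and linearity of differentiation,
\[
\nabla^2 h_p(z) = \nabla^2 f_p(z) - \frac{1}{P}\sum_{p'=1}^P \nabla^2 f_{p'}(z) = \frac{1}{P}\sum_{p'=1}^P\left(\nabla^2 f_p(z) - \nabla^2 f_{p'}(z)\right),
\]
so the triangle inequality for the induced operator norm together with Assumption \ref{assump: heterogeneous} gives $\|\nabla^2 h_p(z)\| \le \frac{1}{P}\sum_{p'=1}^P \zeta = \zeta$ for every $z \in \mathbb{R}^d$.

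Next I would upgrade this pointwise Hessian bound to a global Lipschitz bound on $\nabla h_p$ via the fundamental theorem of calculus along the segment from $y$ to $x$: writing $\nabla h_p(x) - \nabla h_p(y) = \int_0^1 \nabla^2 h_p\bigl(y + t(x-y)\bigr)(x - y)\, dt$, we obtain $\|\nabla h_p(x) - \nabla h_p(y)\| \le \int_0^1 \bigl\|\nabla^2 h_p(y + t(x-y))\bigr\|\,\|x - y\|\, dt \le \zeta\|x - y\|$. Squaring both sides yields $\|\nabla f_p(x) - \nabla f_p(y) + \nabla f(y) - \nabla f(x)\|^2 = \|\nabla h_p(x) - \nabla h_p(y)\|^2 \le \zeta^2\|x-y\|^2$, which is the claim.

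The argument is essentially routine; the only point deserving a word of care is that it implicitly invokes twice-differentiability of each $f_p$ (hence of $f$) and the integral representation of $\nabla h_p(x) - \nabla h_p(y)$. This is already presupposed by the statement of Assumption \ref{assump: heterogeneous}, which is phrased in terms of Hessians, so no additional regularity hypothesis is needed. Beyond that, there is no genuine obstacle — the content is entirely in the observation that the global Hessian is the average of the local ones, so the second-order heterogeneity bound passes directly to $h_p$.
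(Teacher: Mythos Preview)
Your argument is correct and morally the same as the paper's, but the organization differs slightly. The paper first expands $f=\frac{1}{P}\sum_{p'}f_{p'}$ and applies Jensen's inequality for $\|\cdot\|^2$ to reduce to the pairwise quantities $\|\nabla f_p(x)-\nabla f_p(y)+\nabla f_{p'}(y)-\nabla f_{p'}(x)\|^2$, then invokes the mean value theorem on $f_p-f_{p'}$ together with Assumption~\ref{assump: heterogeneous} to bound each pairwise term by $\zeta^2\|x-y\|^2$. You instead bound $\|\nabla^2(f_p-f)\|$ directly via the triangle inequality on the operator norm and then pass to the gradient Lipschitz bound through the integral form of the fundamental theorem of calculus. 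Your route is a touch more streamlined (no Jensen step) and the integral representation is technically cleaner for vector-valued maps than a bare mean value theorem; the paper's pairwise decomposition, on the other hand, makes the role of the pairwise Hessian bound in Assumption~\ref{assump: heterogeneous} slightly more explicit. Both arrive at the same constant $\zeta^2$ with no slack.
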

\begin{proof}
From the convexity of $\|\cdot\|^2$, we have
\begin{align*}
    &\|\nabla f_p(x) - \nabla f_p(y) + \nabla f(y) - \nabla f(x)\|^2\\
    \leq&\ \frac{1}{P}\sum_{p'\neq p} \|\nabla f_p(x) - \nabla f_p(y) + \nabla f_{p'}(y) - \nabla f_{p'}(x)\|^2.
\end{align*}

Since $f_p - f_{p'}$ is $C^2$- function, $\nabla f_p (x) - \nabla f_{p'}(x) - \nabla f_p(y) + \nabla f_{p'}(y) = \nabla (f_p - f_{p'}) (x) - \nabla (f_p - f_{p'}) (y) = (\nabla^2 (f_p - f_{p'})(\xi)) (x - y)$ for some $\xi \in \mathbb{R}^d$ by Mean value theorem. Hence, we have
\begin{align*}
    &\|\nabla f_p(x) - \nabla f_p(y) + \nabla f_{p'}(y) - \nabla f_{p'}(x)\|^2 \\
    \leq&\ \frac{1}{P}\sum_{p'\neq p}\|\nabla^2 f_p(\xi) - \nabla^2 f_{p'}(\xi)\|_2^2 \|x - y\|^2 \\
    \leq&\ \zeta^2\|x - y\|^2.
\end{align*}
Here the last inequality holds thanks to Assumption \ref{assump: heterogeneous}. 
\end{proof}

\subsection*{Proof of Lemma \ref{lem: local_grad_var_nonconvex}}
Observe that 
\begin{align*}
    &\mathbb{E}\|v_k^{(p)} - \nabla f(x_{k-1}^{(p)})\|^2 \\
    =&\ \mathbb{E}\|g_k^{(p)}(x_{k-1}^{(p)}) - g_{k}^{(p)}(x_{k-2}^{(p)})  + v_{k-1}^{(p)} - \nabla f(x_{k-1}^{(p)})\|^2 \\
    =&\ \mathbb{E}\|g_k^{(p)}(x_{k-1}^{(p)}) - g_{k}^{(p)}(x_{k-2}^{(p)}) - \nabla f_p(x_{k-1}^{(p)}) + \nabla f_p(x_{k-2}^{(p)})\|^2 \\
    &+ \mathbb{E}\|\nabla f_p(x_{k-1}^{(p)}) - \nabla f_p(x_{k-1}^{(p)}) + v_{k-1}^{(p)} - \nabla f(x_{k-1}^{(p)})\|^2 \\
    \leq&\ \mathbb{E}\|g_k^{(p)}(x_{k-1}^{(p)}) - g_{k}^{(p)}(x_{k-2}^{(p)})  - \nabla f_p(x_{k-1}) + \nabla f_p(x_{k-2}^{(p)})\|^2 \\
    &+ (1+K)\mathbb{E}\|\nabla f_p(x_{k-1}) - \nabla f_p(x_{k-2}^{(p)}) + \nabla f(x_{k-2}^{(p)}) - \nabla f(x_{k-1}^{(p)})\|^2 \\
    &+ \left(1+\frac{1}{K}\right)\mathbb{E}\|v_{k-1}^{(p)} - \nabla f(x_{k-2}^{(p)})\|^2 \\
    \leq&\ \frac{1}{b}\mathbb{E}\left[\mathbb{E}_{z\sim \mathcal{D}_p}\|\nabla \ell(x_{k-1}^{(p)}, z) - \nabla \ell(x_{k-2}^{(p)}, z)\|^2\right] \\
    &+ (1+K)\mathbb{E}\|\nabla f_p(x_{k-1}^{(p)}) - \nabla f_p(x_{k-2}^{(p)}) + \nabla f(x_{k-2}^{(p)}) - \nabla f(x_{k-1}^{(p)})\|^2 \\
    &+ \left(1+\frac{1}{K}\right)\mathbb{E}\|v_{k-1}^{(p)} - \nabla f(x_{k-2}^{(p)})\|^2.
\end{align*}
Here, the second equality holds because $\mathbb{E}[g_k^{(p)}(x_{k-1}^{(p)})] = \nabla f_p(x_{k-1}^{(p)})$ and $\mathbb{E}[g_k^{(p)}(x_{k-2}^{(p)})] = \nabla f_p(x_{k-2}^{(p)})$. The fist inequality is from Cauchy-Schwarz inequlality and Arithmetic Mean-Geometric Mean inequality. The last inequality follows from the fact that $g_k^{(p)}$ constituted by $b$ IID stochastic gradients. 
Recursively using this inequality, we have     
\begin{align*}
    &\mathbb{E}\|v_k^{(p)} - \nabla f(x_{k-1}^{(p)})\|^2 \\
    \leq&\ \frac{eK}{b}\frac{1}{K}\sum_{k'=1}^K\mathbb{E}\left[\mathbb{E}_{z\sim \mathcal{D}_p}\|\nabla \ell(x_{k'-1}^{(p)}, z) - \nabla \ell(x_{k'-2}^{(p)}, z)\|^2\right] \\ &+e(1+K)K \frac{1}{K}\sum_{k'=1}^K\mathbb{E}\|\nabla f_p(x_{k'-1}^{(p)}) - \nabla f_p(x_{k'-2}^{(p)}) + \nabla f(x_{k'-2}^{(p)}) - \nabla f(x_{k'-1}^{(p)})\|^2 \\
    &+ e\|v_0 - \nabla f(x_0)\|^2.
\end{align*}
Note that $x_0^{(p)} = x_0$ and $v_0^{(p)} = v_0$.
Then applying Lemma \ref{lem: hetero_var}, we get
\begin{align*}
    \mathbb{E}\|v_k^{(p)} - \nabla f(x_{k-1}^{(p)})\|^2 \leq&\ \Theta\left(\frac{K}{b}\right)\frac{1}{K}\sum_{k'=1}^K\mathbb{E}[\mathbb{E}_{z\sim \mathcal{D}_p}\|\nabla \ell(x_{k'-1}^{(p)}, z) - \nabla \ell(x_{k'-2}^{(p)}, z)\|^2] \\
    &+\Theta(\zeta^2K^2)\frac{1}{K}\sum_{k'=1}^K\mathbb{E}\|x_{k'-1}^{(p)} - x_{k'-2}^{(p)}\|^2 \\
    &+ \Theta(1)\|v_0 - \nabla f(x_0)\|^2 \\
    \leq&\  \Theta\left(\frac{L^2K}{b} + \zeta^2K^2\right)\frac{1}{K}\sum_{k'=1}^K\mathbb{E}\|x_{k'-1}^{(p)} - x_{k'-2}^{(p)}\|^2 \\
    &+ \Theta(1)\|v_0 - \nabla f(x_0)\|^2 \\
    \leq&\  \Theta\left(\frac{\eta^2L^2K}{b} + \eta^2\zeta^2K^2\right)\frac{1}{K}\sum_{k'=1}^K\mathbb{E}\|v_{k'-1}^{(p)} - \nabla f(x_{k'-1}^{(p)})\|^2 \\
    &+ \Theta\left(\frac{\eta^2L^2K}{b} + \eta^2\zeta^2K^2\right)\frac{1}{K}\sum_{k'=1}^K\mathbb{E}\|\nabla f(x_{k'-1}^{(p)})\|^2 \\
    &+ \Theta(1)\|v_0 - \nabla f(x_0)\|^2.
\end{align*}
Here, The second inequality holds by Assumption \ref{assump: local_loss_smoothness}. Averaging this inequality from $k=1$ to $K$ and choosing sufficiently small $\eta_2$ such that $\eta_2 = \Theta(1/(K\zeta)\wedge \sqrt{b}/(\sqrt{K}L))$, for any $\eta \leq \eta_2'$, the factor $\Theta(\eta^2L^2K/b + \eta^2\zeta^2K^2)$ becomes smaller than $1/2$. 
This gives the desired result. \qed

\subsection*{Proof of Proposition \ref{prop: local_grad_bound_nonconvex}}
From Lemma \ref{lem: descent}, we have
\begin{align*}
    \frac{1}{K}\sum_{k=1}^K \mathbb{E}\|\nabla f(x_{k-1}^{(p)})\|^2 \leq \Theta\left(\frac{1}{\eta K}\right)(\mathbb{E}f(x_0) - \mathbb{E}f(x_K^{(p)})) + \Theta(1)\frac{1}{K}\sum_{k=1}^K \mathbb{E}\|v_k^{(p)} - \nabla f(x_{k-1}^{(p)})\|^2. 
\end{align*}
Applying Lemma \ref{lem: local_grad_var_nonconvex} to this inequality, there exists $\eta_3  = \Theta((1/L) \wedge \eta_2)$, where $\eta_2$ is defined in Lemma \ref{lem: local_grad_var_nonconvex}, such that for every $\eta \leq \eta_3$, we get
\begin{align*}
    \frac{1}{K}\sum_{k=1}^K \mathbb{E}\|\nabla f(x_{k-1}^{(p)})\|^2 \leq \Theta\left(\frac{1}{\eta K}\right)(\mathbb{E}f(x_0) - \mathbb{E}f(x_K^{(p)})) + \Theta(1)\|v_0 - \nabla f(x_0)\|^2. 
\end{align*}
Finally, since $\hat k \sim \mathrm{Unif}[K]$, taking expectation with respect to $\hat k$ gives the desired result. \qed

\section{Analysis of BVR-L-SGD (Algorithm \ref{alg: bvr_l_sgd})}
\label{app_sec: bvr_l_sgd}
In this section, we provide the analysis of BVR-L-SGD.
\subsection*{Proof of Lemma \ref{lem: global_grad_var_bound_nonconvex}}
We define $V_t^{(p)}$ as $\frac{1}{K}\sum_{k=1}^K\mathbb{E}\|v_k^{(p)} - \nabla f(x_{k-1}^{(p)})\|^2$ in Local-Routine at iteration $t$. Then, we can rewrite the statement in Lemma \ref{lem: local_grad_var_nonconvex} as 
\begin{align*}
    V_t^{(p)} \leq \Theta\left(\frac{\eta^2L^2K}{b} + \eta^2\zeta^2K^2\right)\mathbb{E}\left\|\nabla f(x_t^{(p), \mathrm{out}})\right\|^2 + \Theta(1)\|\widetilde v_t - \nabla f(x_{t-1})\|^2. 
\end{align*}
Averaging this inequality from $p=1$ to $P$, we have
\begin{align}
    \frac{1}{P}\sum_{p=1}^PV_t^{(p)} \leq \Theta\left(\frac{\eta^2L^2K}{b} + \eta^2\zeta^2K^2\right)\frac{1}{P}\sum_{p=1}^P\mathbb{E}\left\|\nabla f(x_t^{(p), \mathrm{out}})\right\|^2 + \Theta(1)\|\widetilde v_t - \nabla f(x_{t-1})\|^2. 
    \label{ineq: global_var_bound}
\end{align}
Observe that 
\begin{align*}
    &\mathbb{E}\|\widetilde v_t - \nabla f(x_{t-1})\|^2 \\
    =&\ \mathbb{E}\left\|\frac{1}{P}\sum_{p=1}^P\left( g_t^{(p)}(x_{t-1}) - g_t^{(p)}(x_{t-2}) + \widetilde v_{t-1}^{(p)}\right) - \nabla f(x_{t-1})\right\|^2 \\
    =&\ \mathbb{E}\left\|\frac{1}{P}\sum_{p=1}^P\left( g_t^{(p)}(x_{t-1}) - g_t^{(p)}(x_{t-2})\right) - \nabla f(x_{t-1}) + \nabla f(x_{t-2})\right\|^2 + \mathbb{E}\|\widetilde v_{t-1} - \nabla f(x_{t-2})\|^2 \\
    =&\ \frac{1}{P^2}\sum_{p=1}^P\mathbb{E}\left\|g_t^{(p)}(x_{t-1}) - g_t^{(p)}(x_{t-2}) - \nabla f_p(x_{t-1}) + \nabla f_p(x_{t-2})\right\|^2 + \mathbb{E}\|\widetilde v_{t-1} - \nabla f(x_{t-2})\|^2 \\
    \leq&\ \frac{1}{PKb}\frac{1}{P}\sum_{p=1}^P\mathbb{E}\left[\mathbb{E}_{z\sim D_p}\left\|\nabla \ell(x_{t-1}, z) - \nabla \ell(x_{t-2}, z)\right\|^2\right] + \mathbb{E}\|\widetilde v_{t-1} - \nabla f(x_{t-2})\|^2.
\end{align*}
Here, the second inequality holds from $\mathbb{E}[g_t^{(p)}(x_{t-1})|t-1] = \nabla f_p(x_{t-1})$ and $\mathbb{E}[g_t^{(p)}(x_{t-2})|t-1] = \nabla f_p(x_{t-2})$. The last equality is from the independency of $g_t^{(p)}(x_{t-1}) - g_t^{(p)}(x_{t-2})$ given the history of the iterations $1, \ldots, t-1$. The last inequality holds because the $Kb$ samples used for $g_t^{(p)}(x_{t-1})$ and $g_t^{(p)}(x_{t-2})$ are IID.  
Recursively using this inequality, we have
\begin{align*}
    &\mathbb{E}\|\widetilde v_t - \nabla f(x_{t-1})\|^2 \\
    \leq&\ \frac{T}{PKb}\frac{1}{T}\sum_{t'=1}^T\frac{1}{P}\sum_{p=1}^P\mathbb{E}\left[\mathbb{E}_{z\sim D_p}\left\|\nabla \ell(x_{t'-1}, z) - \nabla \ell(x_{t'-2}, z)\right\|^2\right] + \mathbb{E}\|\widetilde v_{0} - \nabla f(x_0)\|^2 \\
    \leq&\ \frac{L^2T}{PKb}\frac{1}{T}\sum_{t'=1}^T\mathbb{E}\left\|x_{t'-1} - x_{t'-2}\right\|^2 + \mathds{1}_{\widetilde b < \frac{1}{P}\sum_{p=1}^P\#\mathrm{supp}( D_p)}\frac{\sigma^2}{P\widetilde b}.
\end{align*}
The last inequality follows from Assumptions \ref{assump: local_loss_smoothness} and \ref{assump: bounded_loss_gradient_variance} with the definition of $\widetilde v_0$.
From Lemma \ref{lem: sol_dev}, we have
\begin{align*}
    \mathbb{E}\|x_{t'} - x_{t'-1}\|^2 \leq \Theta(\eta^2K^2) \frac{1}{P}\sum_{p=1}^P(V_{t'}^{(p)} + \mathbb{E}\|\nabla f(x_{t'}^{(p), \mathrm{out}})\|^2).
\end{align*}
Hence, we get
\begin{align*}
    &\frac{1}{T}\sum_{t=1}^T\mathbb{E}\|\widetilde v_t - \nabla f(x_{t-1})\|^2 \\
    \leq&\ \frac{L^2T}{PKb}\frac{1}{T}\sum_{t'=1}^T\mathbb{E}\left\|x_{t'-1} - x_{t'-2}\right\|^2 + \mathds{1}_{\widetilde b < \frac{1}{P}\sum_{p=1}^P\#\mathrm{supp}( D_p)}\frac{\sigma^2}{P\widetilde b} \\
    \leq&\ \frac{\eta^2L^2KT}{Pb}\frac{1}{T}\sum_{t'=1}^T\frac{1}{P}\sum_{p=1}^PV_{t'}^{(p)} + \frac{\eta^2L^2KT}{Pb}\frac{1}{T}\sum_{t'=1}^T\frac{1}{P}\sum_{p=1}^P\mathbb{E}\|\nabla f(x_{t'-1}^{(p), \mathrm{out}})\|^2 + \mathds{1}_{\widetilde b < \frac{1}{P}\sum_{p=1}^P\#\mathrm{supp}( D_p)}\frac{\sigma^2}{P\widetilde b}.
\end{align*}
Choosing $\eta_4 \leq \eta_3$ such that $\Theta(\eta_4^2L^2 KT/Pb) < 1/2$, for every $\eta \leq \eta_4$, combining (\ref{ineq: global_var_bound}) yields 
\begin{align*}
    &\frac{1}{T}\sum_{t=1}^T\mathbb{E}\|\widetilde v_t - \nabla f(x_{t-1})\|^2 \\
    \leq&\  \Theta\left(\frac{\eta^2L^2K}{b} + \eta^2\zeta^2K^2 +  \frac{\eta^2L^2KT}{Pb}\right)\frac{1}{T}\sum_{t'=1}^T\frac{1}{P}\sum_{p=1}^P\mathbb{E}\|\nabla f(x_{t'-1}^{(p), \mathrm{out}})\|^2 + \mathds{1}_{\widetilde b < \frac{1}{P}\sum_{p=1}^P\#\mathrm{supp}( D_p)}\frac{\sigma^2}{P\widetilde b}.
\end{align*}
This is the desired result. \qed

\subsection*{Proof of Theorem \ref{thm: nonconvex}}
The statement of Proposition \ref{prop: local_grad_bound_nonconvex} at iteration $t$ implies
\begin{align*}
    \frac{1}{P}\sum_{p=1}^P\mathbb{E}\|\nabla f(x_t^{(p), \mathrm{out}})\|^2 \leq \Theta\left(\frac{1}{\eta K}\right)(\mathbb{E}f(x_{t-1}) - \mathbb{E}f(x_t)) + \Theta(1)\mathbb{E}\|\widetilde v_t - \nabla f(x_{t-1})\|^2.
\end{align*}
Averaging this inequality from $t=1$ to $T$ results in
\begin{align*}
    \frac{1}{T}\sum_{t=1}^T\frac{1}{P}\sum_{p=1}^P\mathbb{E}\|\nabla f(x_t^{(p), \mathrm{out}})\|^2 \leq \Theta\left(\frac{1}{\eta TK}\right)(\mathbb{E}f(x_{0}) - \mathbb{E}f(x_T)) + \Theta(1)\frac{1}{T}\sum_{t=1}^T\mathbb{E}\|\widetilde v_t - \nabla f(x_{t-1})\|^2.
\end{align*}
Then, applying Lemma \ref{lem: global_grad_var_bound_nonconvex} to this inequality, there exists $\eta = \Theta(1/L \wedge 1/(K\zeta) \wedge \sqrt{b}/(\sqrt{K}L) \wedge \sqrt{Pb}/(\sqrt{KT}L)$ such that 
\begin{align*}
    \frac{1}{T}\sum_{t=1}^T\frac{1}{P}\sum_{p=1}^P\mathbb{E}\|\nabla f(x_t^{(p), \mathrm{out}})\|^2 \leq \Theta\left(\frac{1}{\eta KT}\right)(\mathbb{E}f(x_{0}) - \mathbb{E}f(x_T)) + \Theta(1)\mathds{1}_{\widetilde b < \frac{1}{P}\sum_{p=1}^P\#\mathrm{supp}( D_p)}\frac{\sigma^2}{P\widetilde b}.
\end{align*}
From the definitions of $\widetilde x_s$ and $\widetilde x_s^{\mathrm{out}}$, we obtain
\begin{align*}
    \mathbb{E}\|\nabla f(\widetilde x_s^{\mathrm{out}})\|^2 \leq \Theta\left(\frac{1}{\eta KT}\right)(\mathbb{E}f(\widetilde x_{s-1}) - \mathbb{E}f(\widetilde x_{s})) + \Theta(1)\mathds{1}_{\widetilde b < \frac{1}{P}\sum_{p=1}^P\#\mathrm{supp}( D_p)}\frac{\sigma^2}{P\widetilde b}.
\end{align*}
Finally, averaging this inequality from $s=1$ to $S$ and using Assumption \ref{assump: optimal_sol} yield the desired result. \qed

\begin{corollary}
\label{cor: complexity_nonconvex}
Suppose that Assumptions \ref{assump: heterogeneous},  \ref{assump: local_loss_smoothness}, \ref{assump: optimal_sol} and \ref{assump: bounded_loss_gradient_variance} hold. We denote $n := \sum_{p=1}^P\# \mathrm{supp}(D_p)$. Let $\widetilde b = \Theta((n/P) \wedge (\sigma^2/(P\varepsilon)))$. Then, 
there exists $\eta = \Theta(1/L \wedge 1/(K\zeta) \wedge \sqrt{b}/(\sqrt{K}L) \wedge \sqrt{Pb}/(\sqrt{KT}L)))$ such that BVR-L-SGD($\widetilde x_0$, $\eta$, $b$, $\widetilde b$, $K$, $T$, $S$) with $S = \Theta(1 + 1/(\eta KT\varepsilon))$ satisfies 
\begin{align*}
    \mathbb{E}\|\nabla f(\widetilde x^{\mathrm{out}})\|^2 \leq \Theta(\varepsilon).
\end{align*}
Moreover, the total communication complexity $ST$ is 
\begin{align*}
    \Theta\left(\frac{L}{K\varepsilon} + \frac{\zeta}{\varepsilon} + \frac{L}{\sqrt{Kb}\varepsilon} + \sqrt{\frac{T}{KbP}}\frac{L}{\varepsilon} + T\right).
\end{align*}

%and the total computational complexity per worker $ST(Kb + \widetilde b/T)$ is
%\begin{align*}
%    \Theta\left(\left(\frac{L}{K\varepsilon} + \frac{\zeta}{\varepsilon} + \frac{L}{\sqrt{Kb}\varepsilon} + \sqrt{\frac{T}{KbP}}\frac{L}{\varepsilon}\right)\left(Kb + \frac{\frac{n}{P}\wedge \frac{\sigma^2}{P\varepsilon}}{T}\right)\right).
%\end{align*}
\end{corollary}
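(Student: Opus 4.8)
The plan is to derive the corollary directly from Theorem \ref{thm: nonconvex} by choosing $S$ so that the optimization term matches the target accuracy, then tracking how the step size $\eta$ propagates into the product $ST$.

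First I would dispose of the variance-floor term $\Theta(1)\,\mathds{1}_{\widetilde b < n/P}\,\sigma^2/(P\widetilde b)$ appearing in Theorem \ref{thm: nonconvex}. With $\widetilde b = \Theta((n/P)\wedge(\sigma^2/(P\varepsilon)))$ there are two cases. If $n/P \le \sigma^2/(P\varepsilon)$, then $\widetilde b = \Theta(n/P)$, so $\widetilde b \ge n/P$ up to constants, the indicator vanishes, and the term is $0$. Otherwise $\widetilde b = \Theta(\sigma^2/(P\varepsilon)) < n/P$, the indicator is $1$, and $\sigma^2/(P\widetilde b) = \Theta(\varepsilon)$. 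In both cases this term is $\Theta(\varepsilon)$.

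Next I would handle the optimization term $\Theta(1/(\eta KTS))(\mathbb{E}f(\widetilde x_0) - f(x_*))$. Treating the initial suboptimality $\mathbb{E}f(\widetilde x_0) - f(x_*)$ as $\Theta(1)$ (the convention used throughout Table \ref{tab: theoretical_comparison}), forcing this term to be $\Theta(\varepsilon)$ is equivalent to $S \ge \Theta(1/(\eta KT\varepsilon))$; the stated choice $S = \Theta(1 + 1/(\eta KT\varepsilon))$, where the additive $1$ merely guarantees $S\ge 1$, therefore yields $\mathbb{E}\|\nabla f(\widetilde x^{\mathrm{out}})\|^2 \le \Theta(\varepsilon)$, which is the first claim.

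Finally, for the communication complexity I would compute $ST = \Theta(T) + \Theta(1/(\eta K\varepsilon))$ and expand $1/\eta$. By Theorem \ref{thm: nonconvex}, taking reciprocals of the fourfold minimum gives $1/\eta = \Theta(L \vee K\zeta \vee \sqrt{K}L/\sqrt{b} \vee \sqrt{KT}L/\sqrt{Pb})$; dividing by $K\varepsilon$ term by term yields $1/(\eta K\varepsilon) = \Theta\big(L/(K\varepsilon)\vee \zeta/\varepsilon \vee L/(\sqrt{Kb}\varepsilon) \vee \sqrt{T/(KbP)}\,L/\varepsilon\big)$. Since the maximum of finitely many nonnegative quantities is $\Theta$ of their sum, adding back the $T$ term gives exactly $ST = \Theta\big(L/(K\varepsilon) + \zeta/\varepsilon + L/(\sqrt{Kb}\varepsilon) + \sqrt{T/(KbP)}\,L/\varepsilon + T\big)$. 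The one subtlety to flag is that $\eta$ — and hence $S$ — depends on the free input parameter $T$ through the term $\sqrt{Pb}/(\sqrt{KT}L)$; this is not circular since $T$ is chosen by the user before $\eta$ is set, and the bound is deliberately left as a function of $T$ so that it can be optimized against the budget $\mathcal B$ in the remark that follows. I expect the main obstacle to be purely bookkeeping: correctly distributing the fourfold minimum in $\eta$ across the factor $1/(K\varepsilon)$ and verifying the two-case analysis for $\widetilde b$; there is no new analytic content beyond Theorem \ref{thm: nonconvex}.
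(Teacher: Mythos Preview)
Your proposal is correct and matches the paper's approach: the paper states that the corollary follows immediately from Theorem \ref{thm: nonconvex} without further detail, and your argument fills in exactly the straightforward bookkeeping (the two-case analysis for $\widetilde b$, the choice of $S$, and unpacking $1/\eta$) that the paper leaves implicit.
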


\begin{remark}[Communication efficiency]
Given local computation budget $\mathcal B$, we set $T = \Theta(1 + \widetilde b/\mathcal B)$ and $Kb = \Theta(\mathcal B)$ with $b \leq \Theta(\sqrt{\mathcal B})$, where $\widetilde b$ was defined in Corollary \ref{cor: complexity_nonconvex}. Then, we have the averaged number of local computations per communication round $Kb + \widetilde b / T = \Theta(\mathcal B)$ and the total communication complexity with budget $\mathcal B$ becomes $\Theta((L/(\sqrt{\mathcal B}\varepsilon) + \sqrt{n\wedge (\sigma^2/\varepsilon)}L/(\mathcal BP\varepsilon) + (n\wedge (\sigma^2/\varepsilon))/(\mathcal B P) +  \zeta/\varepsilon)$. 
\end{remark}

\section{Practical Implementation of BVR-L-SGD}
\label{app_sec: implementation}
In this section, we give practical implementation details of BVR-L-SGD (Algorithm \ref{alg: bvr_l_sgd_practical}). The blue texts indicates the changes from the original algorithm (Algorithm \ref{alg: bvr_l_sgd}) for more specific, and computational and communication efficient procedures. \par
In line 1, we set $T = \lceil 1 + \widetilde b / (Kb)\rceil$, which has been theoretically determined. In line 16, we at first pick a worker $\hat p$ uniformly random and send aggregated variance reduced gradient $\widetilde v_t$ to it. Then, worker $\hat p$ runs Local-Routine using $\widetilde v_t$ (line 18). Central server receive its output and broadcast it to all the worker (line 19). Note that Algorithm \ref{alg: bvr_l_sgd_practical} only requires single aggregation and single broadcast for each $t \in [T]$.
\begin{algorithm}[t]
\caption{Practical Implementation of BVR-L-SGD($\widetilde x_0$, $\eta$, $b$, $\widetilde b$, $K$, $S$)}
\label{alg: bvr_l_sgd_practical}
\begin{algorithmic}[1]
\STATE {\color{blue}Set $T = \lceil 1 + \widetilde b / (Kb) \rceil$}. 
\FOR {$s=1$ to $S$}
    \FOR {$p=1$ to $P$ \it{in parallel}}
        \IF {$\widetilde b \geq                                 \frac{1}{P}\sum_{p=1}^P\#\mathrm{supp}(D_p)$} \label{line: local_full_grad}
        \STATE $\widetilde \nabla^{(p)} = \nabla f_p(\widetilde x_{s-1})$.
        \ELSE
        \STATE $\widetilde \nabla^{(p)} = \frac{1}{\widetilde b}\sum_{l=1}^{\widetilde b} \nabla \ell(\widetilde x_{s-1}, z_l)$ for $\widetilde b$ IID samples $z_l \sim D_p$.
        \ENDIF
    \ENDFOR
    \STATE {\color{blue}{\bf{Central Server}}:  aggregate $\{\widetilde \nabla^{(p)}\}_{p=1}^P$ and broadcast $\widetilde v_0 = \frac{1}{P}\sum_{p=1}^P \widetilde \nabla^{(p)}$ to all the workers.}
    \STATE Set $x_0 = x_{-1} = \widetilde x_{s-1}$.
    \FOR {$t=1$ to $T$}
        \FOR {$p=1$ to $P$ in parallel}
        \STATE $g_t^{(p)}(x_{t-1}) = \frac{1}{Kb}\sum_{l=1}^{Kb} \ell(x_{t-1}, z_{l})$ and $g_t^{(p)}(x_{t-2}) =  \frac{1}{Kb}\sum_{l=1}^{Kb} \ell(x_{t-2}, z_{l})$ for $z_l \overset{i.i.d.}{\sim} D_p$.
        \STATE $\widetilde v_t^{(p)} = g_t^{(p)}(x_{t-1}) - g_{t}^{(p)}(x_{t-2}) + \widetilde v_{t-1}^{(p)}$.
        \ENDFOR
        \STATE {\color{blue}Pick $\hat p \sim [P]$ uniformly at random}.
        \STATE {\color{blue}{\bf{Central Server}}: aggregate $\{\widetilde v_t^{(p)}\}_{p=1}^P$ and send $\widetilde v_t = \frac{1}{P}\sum_{p=1}^P \widetilde v_t^{(p)}$ to worker $\hat p$}.
        \STATE {\color{blue}$x_t^{(\hat p)}, \_ =$ Local-Routine($\hat p$, $x_{t-1}$, $\eta$, $\widetilde v_t$, $b$, $K$)}
        \STATE {\color{blue}{\bf{Central Server}}: receive $x_t^{(\hat p)}$ and broadcast $x_t = x_t^{(\hat p)}$ to all the workers}.
    \ENDFOR
    \STATE {\color{blue}Set $\widetilde x_s = x_T$.}
\ENDFOR
\STATE {\bf{Return:}} {\color{blue}$\widetilde x^{\mathrm{out}} = \widetilde x_{S}$.}
\end{algorithmic}
\end{algorithm}

\section{Supplementary of Numerical Experiments}\label{app_sec: experiment}
\subsection*{Parameter Tuning}
For all the implemented algorithms, the only tuning parameter was learning rate $\eta$. We ran each algorithm with $\eta \in \{0.005, 0.01, 0.05, 0.1, 0.5, 1.0\}$ and chose the one that maximized the minimum train accuracy at the last $100$ global iterates to take into account not only convergence speed but also stability of convergence.  
\subsection*{Additional Numerical Results}
Here, we provide the full results in our numerical experiments. Figures \ref{app_fig: by_rounds_b=256}, \ref{app_fig: by_rounds_b=512} and \ref{app_fig: by_rounds_b=1024} show the comparisons of train loss, train accuracy, test loss and test accuracy for various $q$ with fixed local computation budget $\mathcal B = 256, 512$ and $1,024$ respectively.

\begin{figure}[t]
\begin{subfigmatrix}{4}
\subfigure[Train Loss]{\includegraphics[width=4cm]{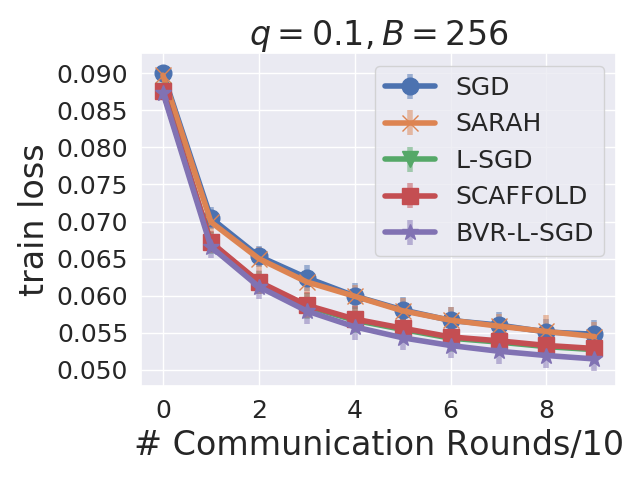}}
\subfigure[Train Accuracy]{\includegraphics[width=4cm]{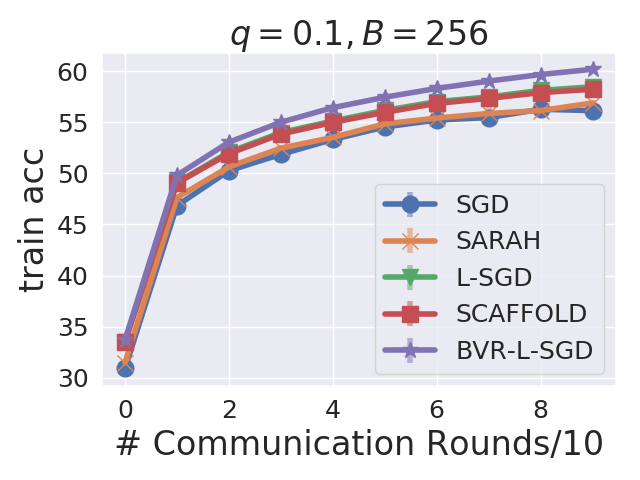}}
\subfigure[Test Loss]{\includegraphics[width=4cm]{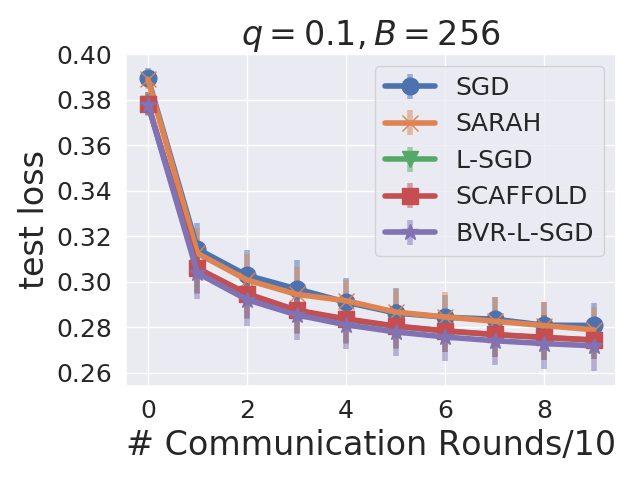}}
\subfigure[Test Accuracy]{\includegraphics[width=4cm]{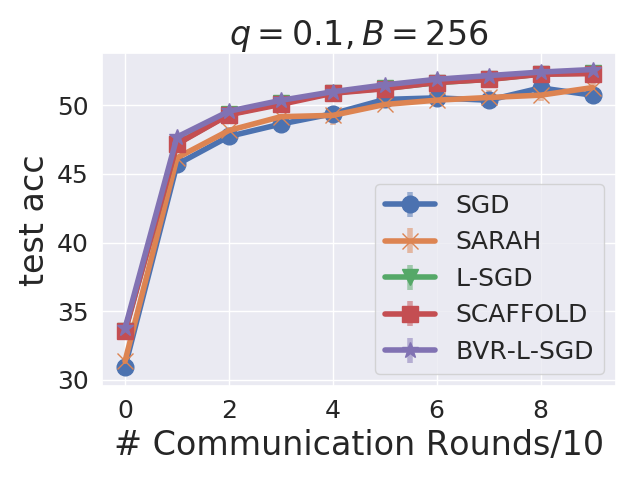}}
\end{subfigmatrix}
\begin{subfigmatrix}{4}
\subfigure[Train Loss]{\includegraphics[width=4cm]{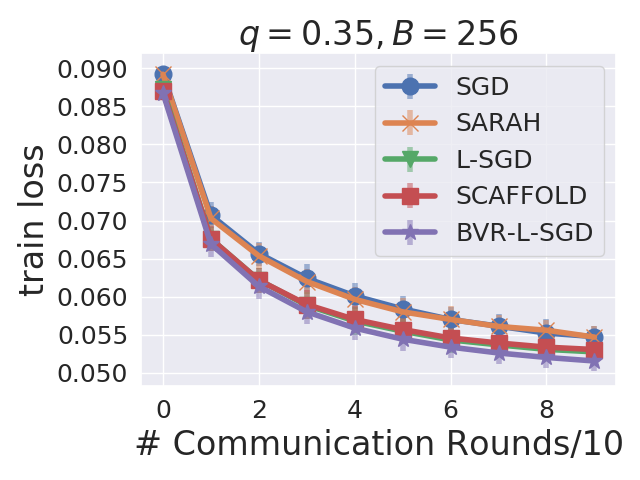}}
\subfigure[Train Accuracy]{\includegraphics[width=4cm]{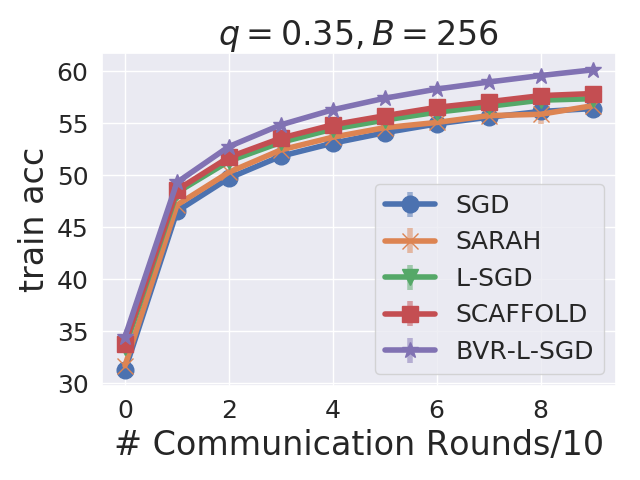}}
\subfigure[Test Loss]{\includegraphics[width=4cm]{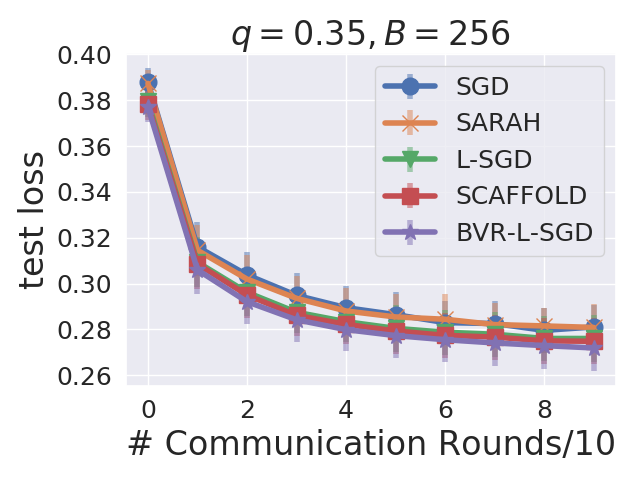}}
\subfigure[Test Accuracy]{\includegraphics[width=4cm]{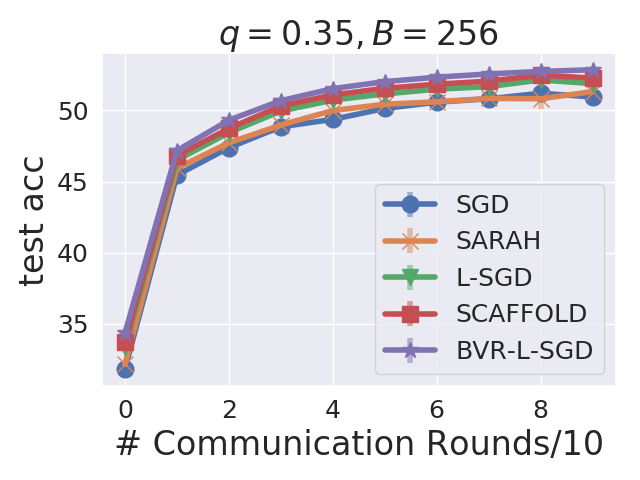}}
\end{subfigmatrix}
\begin{subfigmatrix}{4}
\subfigure[Train Loss]{\includegraphics[width=4cm]{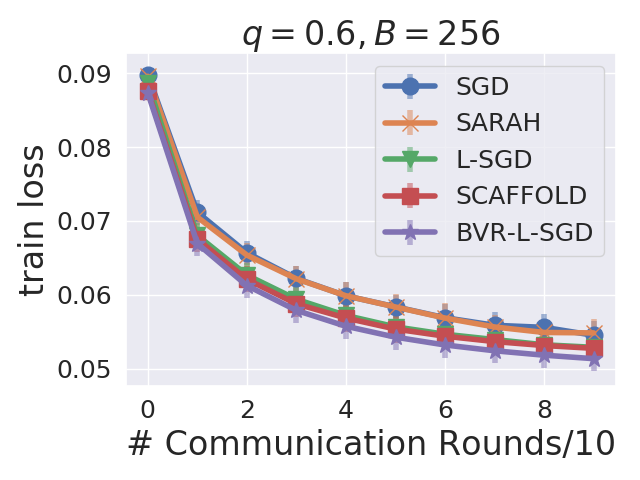}}
\subfigure[Train Accuracy]{\includegraphics[width=4cm]{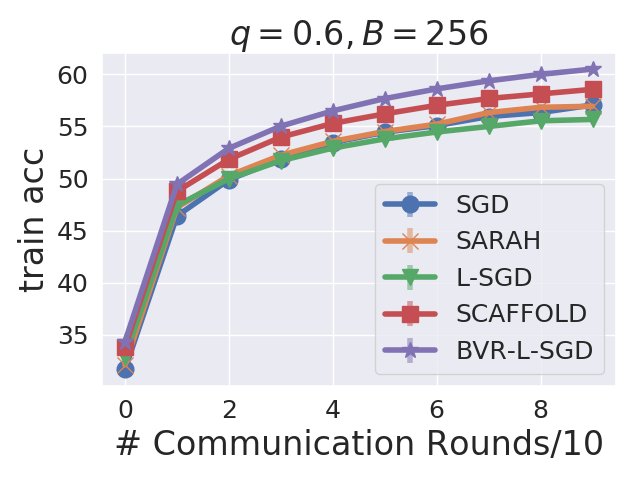}}
\subfigure[Test Loss]{\includegraphics[width=4cm]{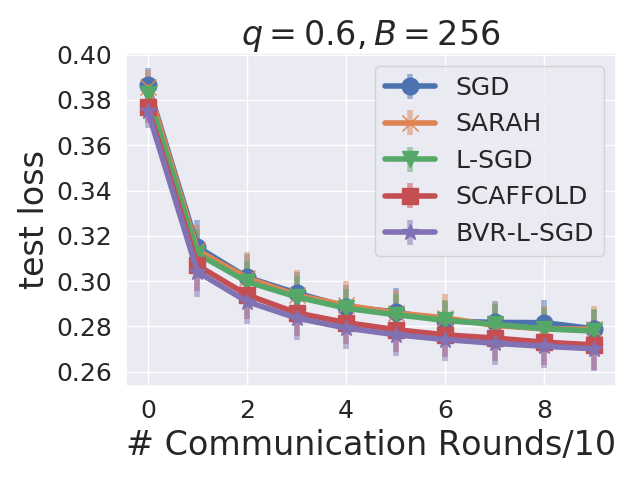}}
\subfigure[Test Accuracy]{\includegraphics[width=4cm]{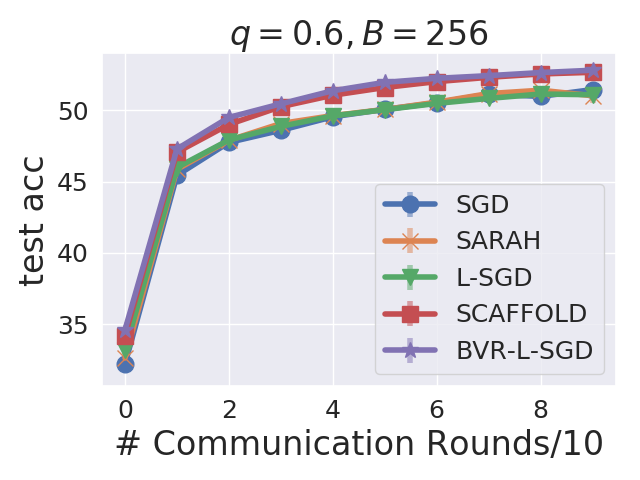}}
\end{subfigmatrix}
\begin{subfigmatrix}{4}
\subfigure[Train Loss]{\includegraphics[width=4cm]{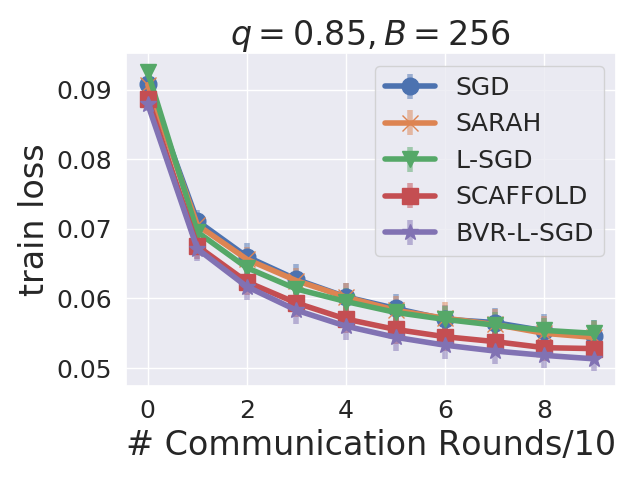}}
\subfigure[Train Accuracy]{\includegraphics[width=4cm]{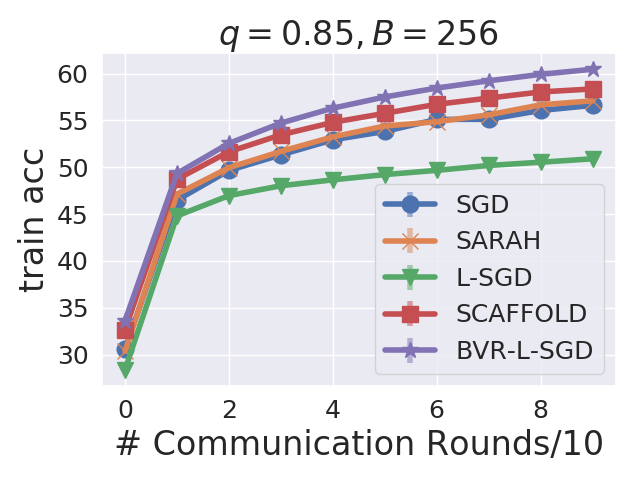}}
\subfigure[Test Loss]{\includegraphics[width=4cm]{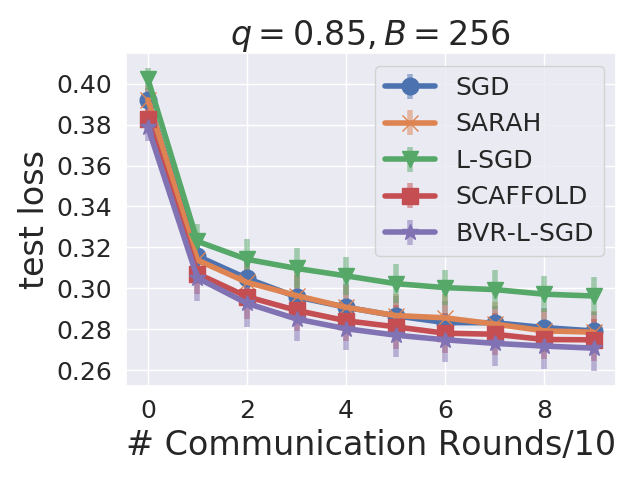}}
\subfigure[Test Accuracy]{\includegraphics[width=4cm]{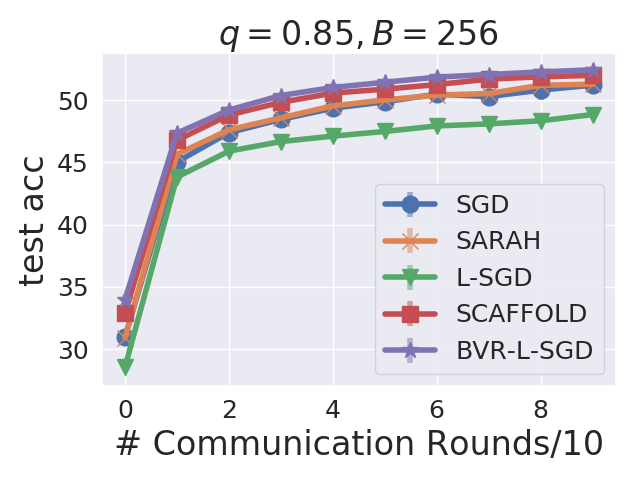}}
\end{subfigmatrix}
\caption{Comparison of the train loss and test accuracy against the number of communication rounds for local computation budget $\mathcal B = 256$. }
\label{app_fig: by_rounds_b=256}
\end{figure}

\begin{figure}[t]
\begin{subfigmatrix}{4}
\subfigure[Train Loss]{\includegraphics[width=4cm]{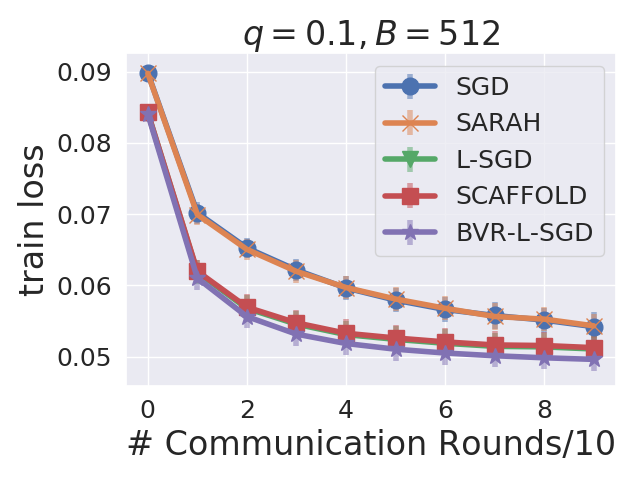}}
\subfigure[Train Accuracy]{\includegraphics[width=4cm]{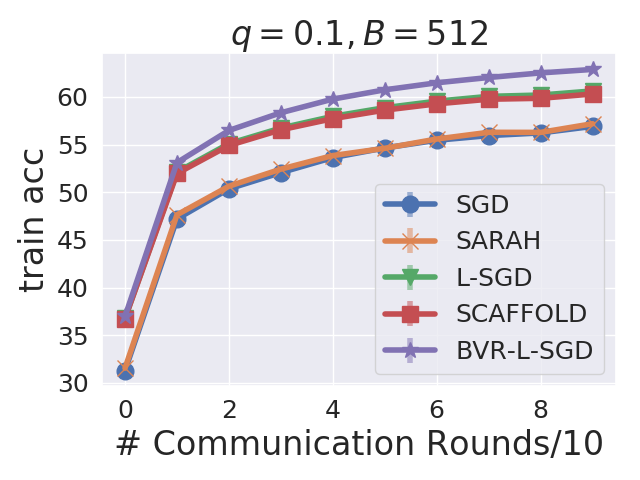}}
\subfigure[Test Loss]{\includegraphics[width=4cm]{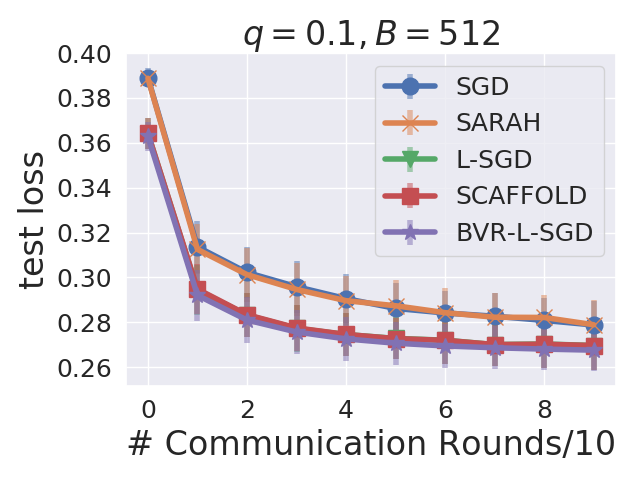}}
\subfigure[Test Accuracy]{\includegraphics[width=4cm]{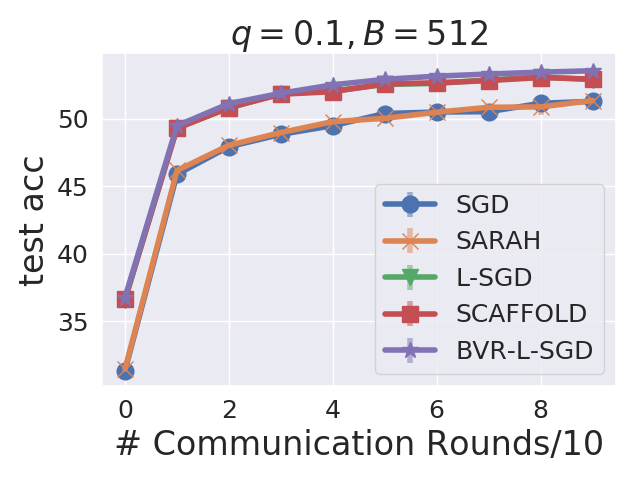}}
\end{subfigmatrix}
\begin{subfigmatrix}{4}
\subfigure[Train Loss]{\includegraphics[width=4cm]{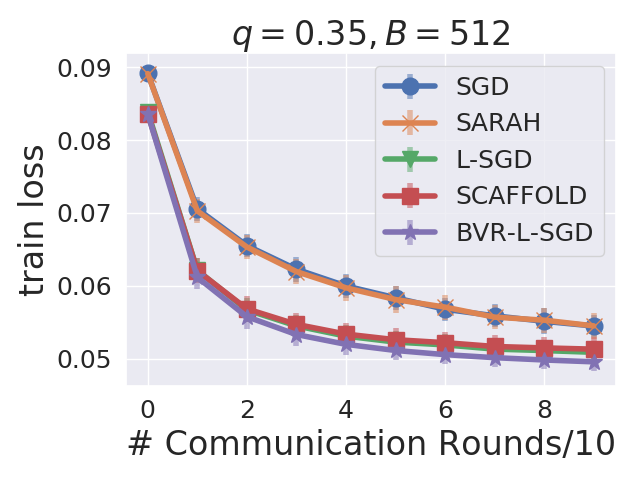}}
\subfigure[Train Accuracy]{\includegraphics[width=4cm]{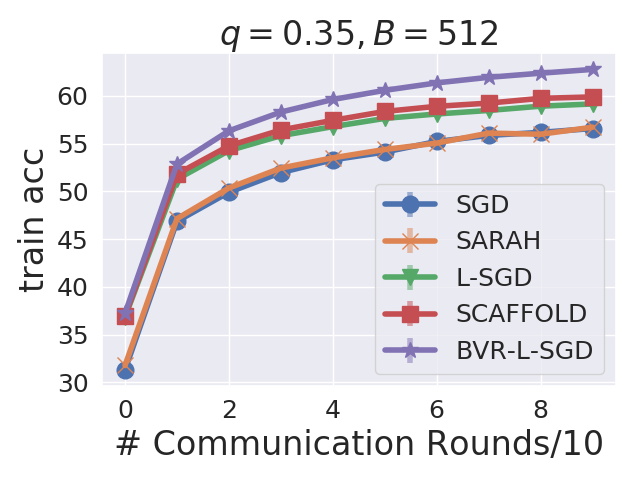}}
\subfigure[Test Loss]{\includegraphics[width=4cm]{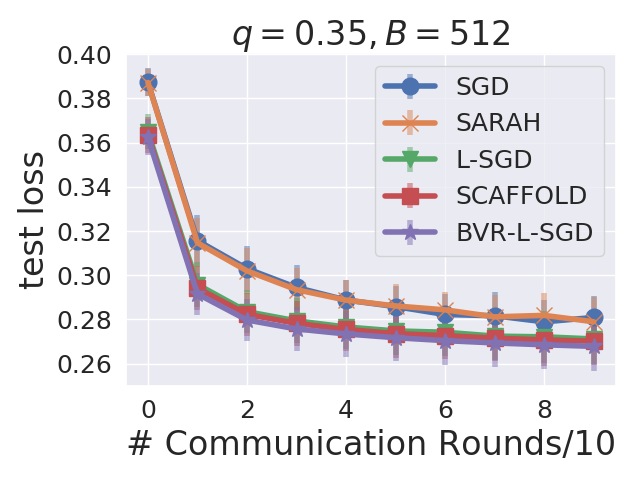}}
\subfigure[Test Accuracy]{\includegraphics[width=4cm]{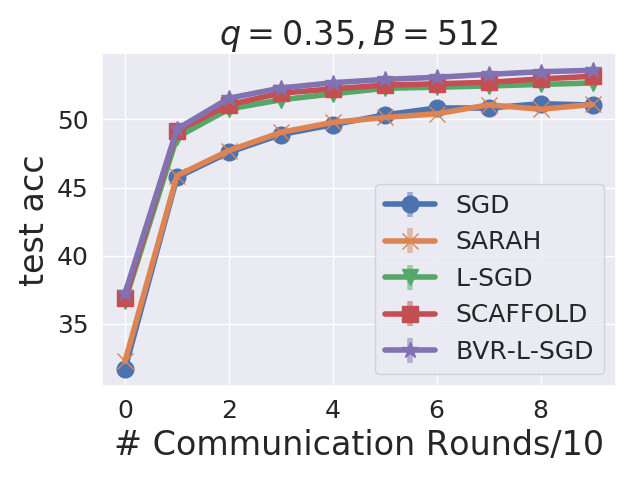}}
\end{subfigmatrix}
\begin{subfigmatrix}{4}
\subfigure[Train Loss]{\includegraphics[width=4cm]{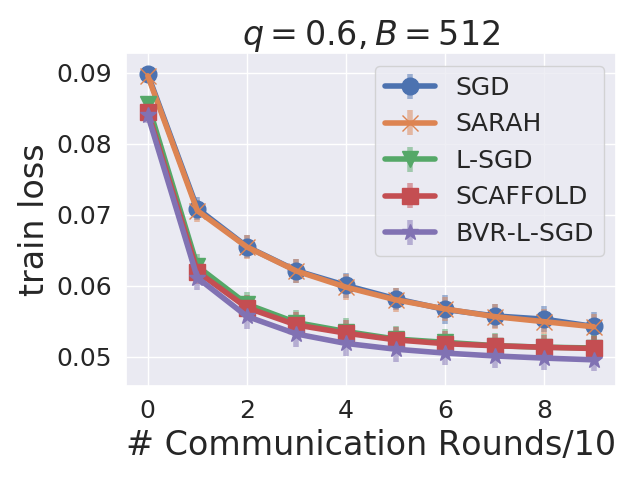}}
\subfigure[Train Accuracy]{\includegraphics[width=4cm]{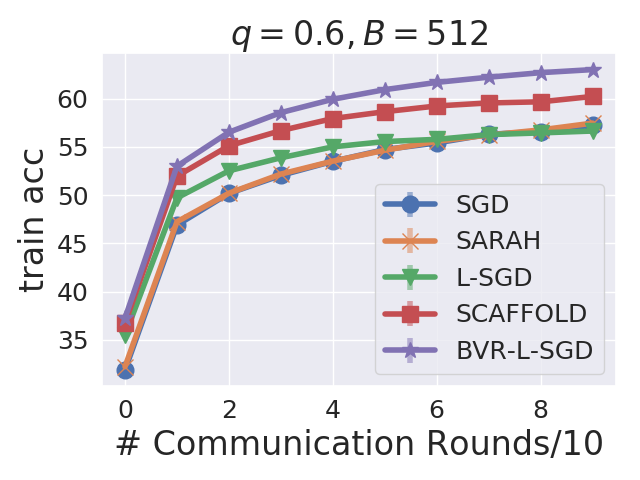}}
\subfigure[Test Loss]{\includegraphics[width=4cm]{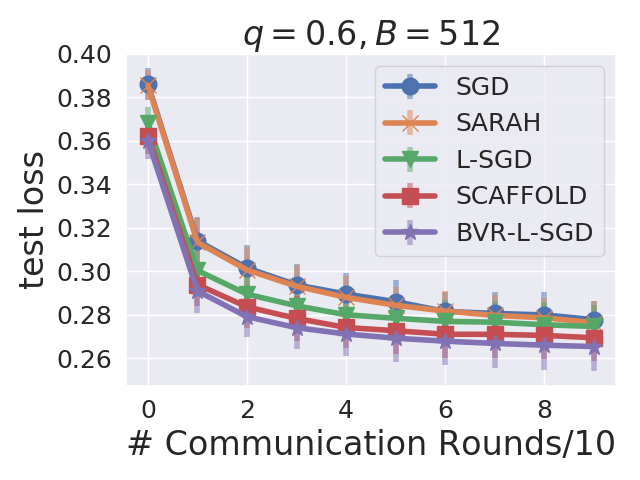}}
\subfigure[Test Accuracy]{\includegraphics[width=4cm]{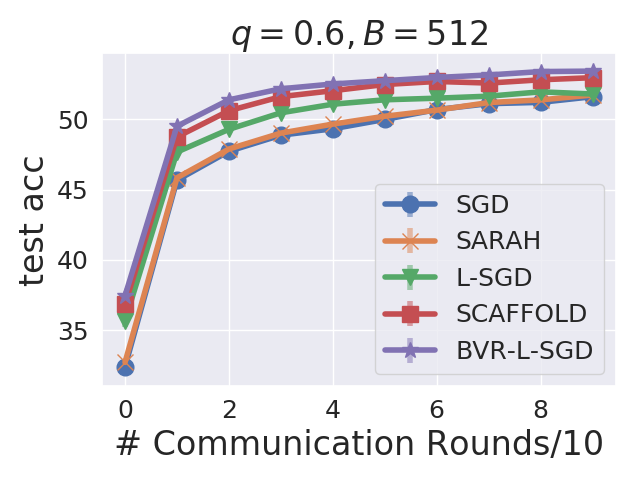}}
\end{subfigmatrix}
\begin{subfigmatrix}{4}
\subfigure[Train Loss]{\includegraphics[width=4cm]{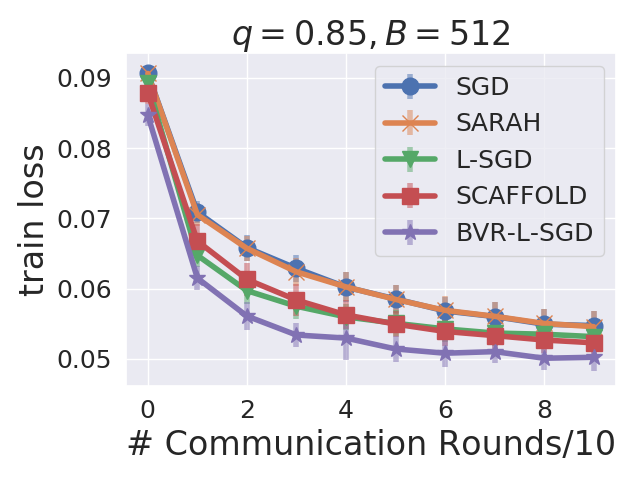}}
\subfigure[Train Accuracy]{\includegraphics[width=4cm]{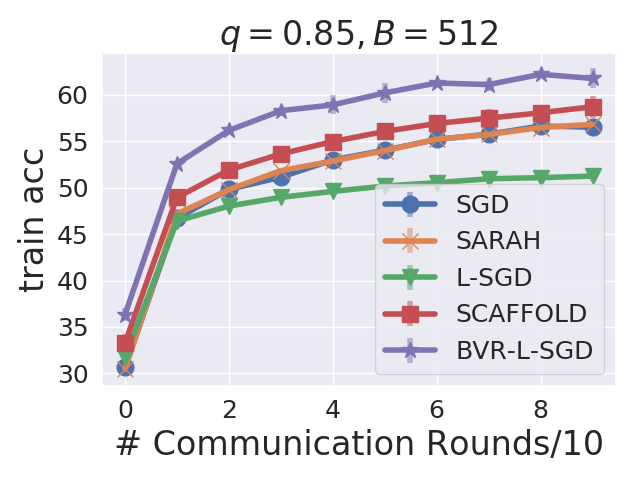}}
\subfigure[Test Loss]{\includegraphics[width=4cm]{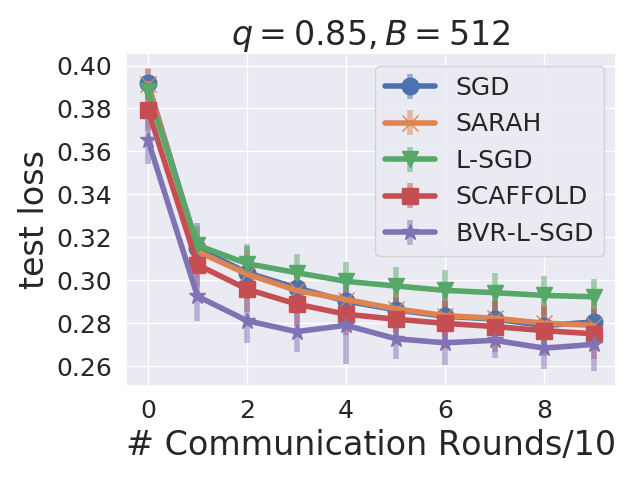}}
\subfigure[Test Accuracy]{\includegraphics[width=4cm]{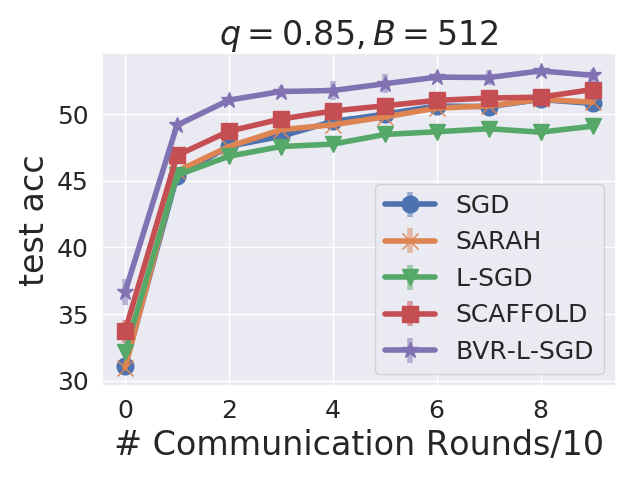}}
\end{subfigmatrix}
\caption{Comparison of the train loss and test accuracy against the number of communication rounds for local computation budget $\mathcal B = 512$. }
\label{app_fig: by_rounds_b=512}
\end{figure}

\begin{figure}[t]
\begin{subfigmatrix}{4}
\subfigure[Train Loss]{\includegraphics[width=4cm]{figs/by_rounds/q=0.1/B=1024/train_loss.png}}
\subfigure[Train Accuracy]{\includegraphics[width=4cm]{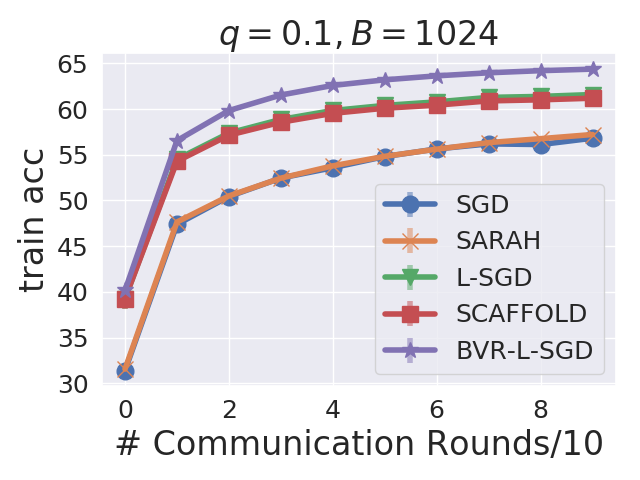}}
\subfigure[Test Loss]{\includegraphics[width=4cm]{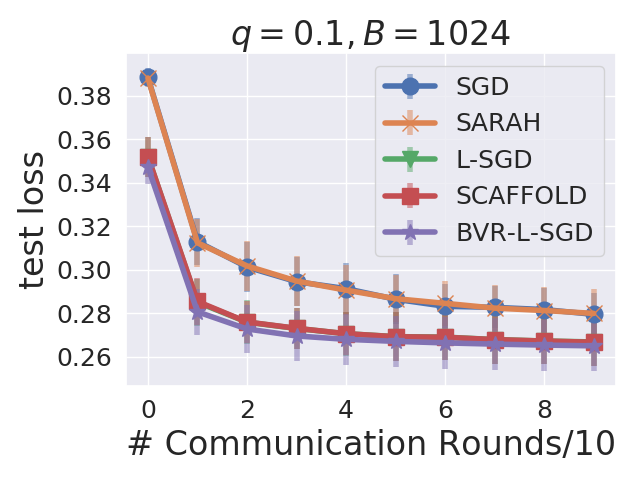}}
\subfigure[Test Accuracy]{\includegraphics[width=4cm]{figs/by_rounds/q=0.1/B=1024/test_acc.png}}
\end{subfigmatrix}
\begin{subfigmatrix}{4}
\subfigure[Train Loss]{\includegraphics[width=4cm]{figs/by_rounds/q=0.35/B=1024/train_loss.png}}
\subfigure[Train Accuracy]{\includegraphics[width=4cm]{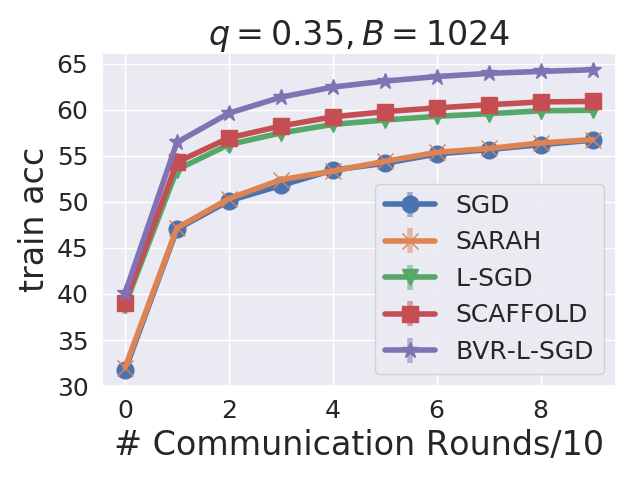}}
\subfigure[Test Loss]{\includegraphics[width=4cm]{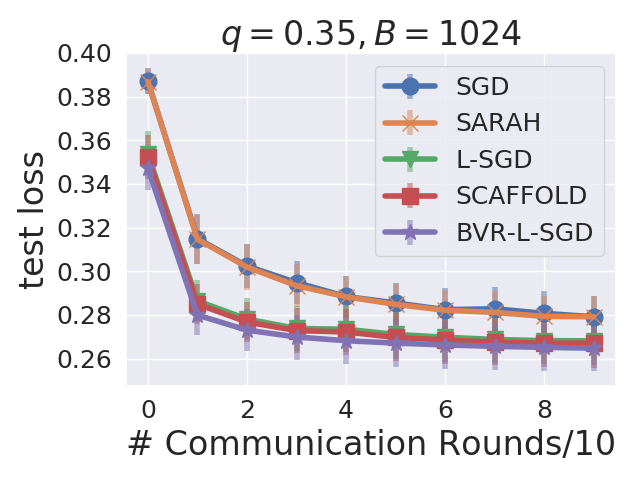}}
\subfigure[Test Accuracy]{\includegraphics[width=4cm]{figs/by_rounds/q=0.35/B=1024/test_acc.png}}
\end{subfigmatrix}
\begin{subfigmatrix}{4}
\subfigure[Train Loss]{\includegraphics[width=4cm]{figs/by_rounds/q=0.6/B=1024/train_loss.png}}
\subfigure[Train Accuracy]{\includegraphics[width=4cm]{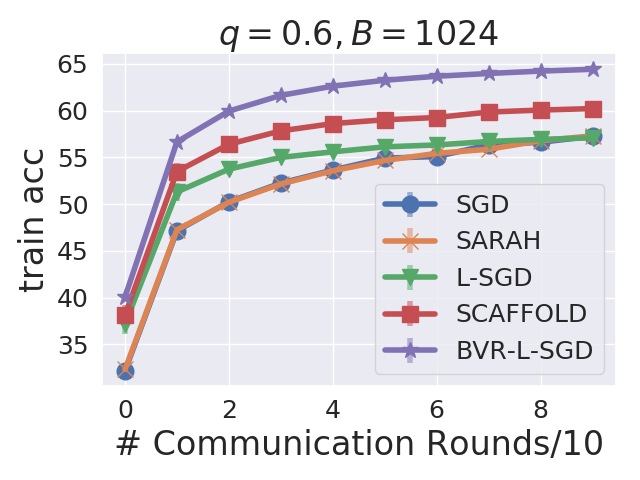}}
\subfigure[Test Loss]{\includegraphics[width=4cm]{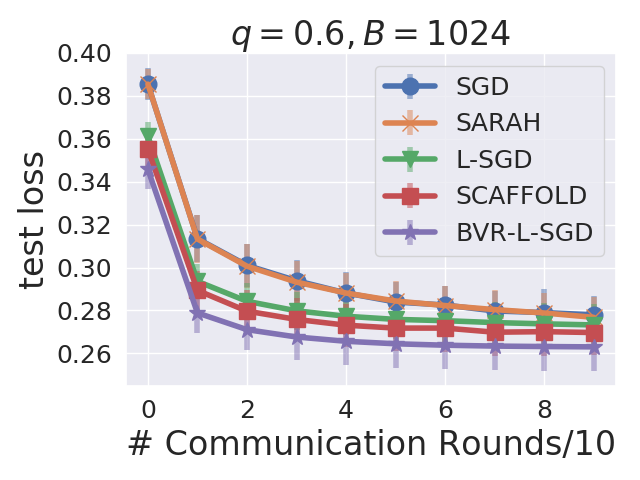}}
\subfigure[Test Accuracy]{\includegraphics[width=4cm]{figs/by_rounds/q=0.6/B=1024/test_acc.png}}
\end{subfigmatrix}
\begin{subfigmatrix}{4}
\subfigure[Train Loss]{\includegraphics[width=4cm]{figs/by_rounds/q=0.85/B=1024/train_loss.png}}
\subfigure[Train Accuracy]{\includegraphics[width=4cm]{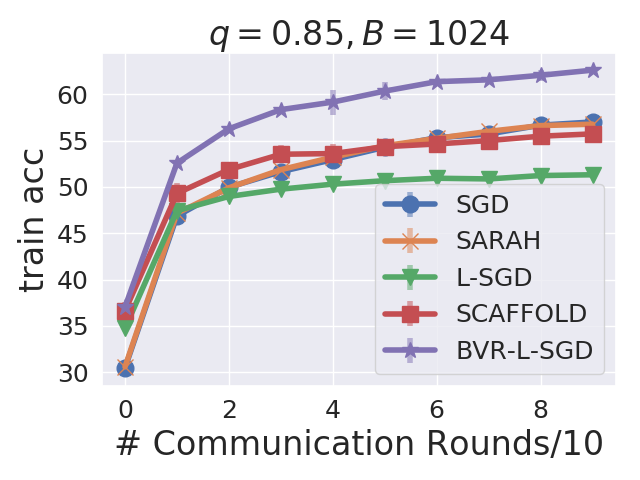}}
\subfigure[Test Loss]{\includegraphics[width=4cm]{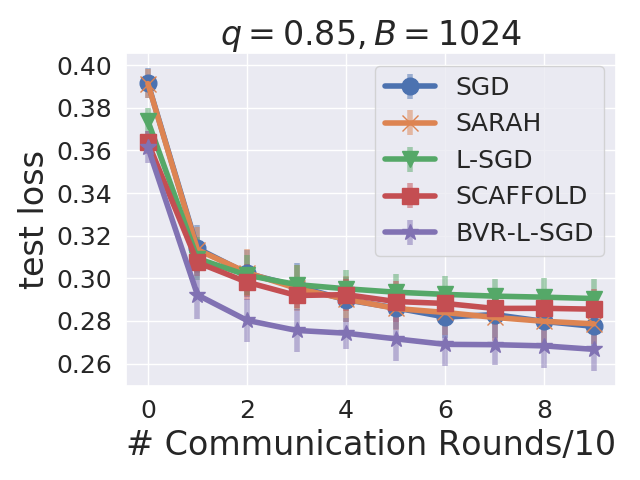}}
\subfigure[Test Accuracy]{\includegraphics[width=4cm]{figs/by_rounds/q=0.85/B=1024/test_acc.png}}
\end{subfigmatrix}
\caption{Comparison of the train loss and test accuracy against the number of communication rounds for local computation budget $\mathcal B = 1,024$. }
\label{app_fig: by_rounds_b=1024}
\end{figure}

\subsection*{Computing Infrastructures}
\begin{itemize}
    \item OS: Ubuntu 16.04.6
    \item CPU: Intel(R) Xeon(R) CPU E5-2680 v4 @ 2.40GHz
    \item CPU Memory: 128 GB.
    \item GPU: NVIDIA Tesla P100.
    \item GPU Memory: 16 GB
    \item Programming language: Python 3.7.3.
    \item Deep learning framework: Pytorch 1.3.1.
\end{itemize}

\end{document}